\newcommand{\mylongtitle}{On Classification-Calibration of Gamma-Phi Losses}
  \title{\mylongtitle}
  \title[\mylongtitle]{\mylongtitle}
\setlist[enumerate]{label*=\arabic*.}
\definecolor{cbblue}{HTML}{CAE6FF}
\definecolor{cbyellow}{HTML}{FFEDBA}
\definecolor{fgblue}{HTML}{1E88E5}
\definecolor{fgyellow}{HTML}{E4427D}
\algrenewcommand\algorithmicrequire{\textbf{Input:}}
\algrenewcommand\algorithmicensure{\textbf{Output:}}
\definecolor{codegreen}{rgb}{0,0.6,0}
\definecolor{codegray}{rgb}{0.5,0.5,0.5}
\definecolor{codepurple}{rgb}{0.58,0,0.82}
\definecolor{backcolour}{rgb}{0.95,0.95,0.92}
\lstdefinestyle{mystyle}{
    backgroundcolor=\color{backcolour},
    commentstyle=\color{codegreen},
    keywordstyle=\color{magenta},
    numberstyle=\tiny\color{codegray},
    stringstyle=\color{codepurple},
    basicstyle=\ttfamily\footnotesize,
    breakatwhitespace=false,
    breaklines=true,
    captionpos=b,
    keepspaces=true,
    numbers=left,
    numbersep=5pt,
    showspaces=false,
    showstringspaces=false,
    showtabs=false,
    tabsize=2
}
  \newtheorem{theorem}{Theorem}[section]
\newtheorem{lemma}[theorem]{Lemma}
\crefname{lemma}{lemma}{lemmas}
\Crefname{lemma}{Lemma}{Lemmas}
\Crefname{figure}{Figure}{Figures}
\crefname{proposition}{Prop.}{Prop.}
\Crefname{proposition}{Proposition}{Propositions}
\newtheorem{corollary}[theorem]{Corollary}
\newtheorem{proposition}[theorem]{Proposition}
\newtheorem{conjecture}[theorem]{Conjecture}
\theoremstyle{definition}
\newtheorem{example}[theorem]{Example}
\newtheorem{definition}[theorem]{Definition}
\newtheorem{assumption}[theorem]{Assumption}
\Crefname{assumption}{Assumption}{Assumptions}
\theoremstyle{remark}
\newtheorem{remark}[theorem]{Remark}
\newtheorem{theorem}{Theorem}[section]
\newtheorem{lemma}[theorem]{Lemma}
\crefname{lemma}{lemma}{lemmas}
\Crefname{lemma}{Lemma}{Lemmas}
\Crefname{figure}{Figure}{Figures}
\crefname{proposition}{Prop.}{Prop.}
\Crefname{proposition}{Proposition}{Propositions}
\newtheorem{corollary}[theorem]{Corollary}
\newtheorem{proposition}[theorem]{Proposition}
\theoremstyle{definition}
\newtheorem{example}[theorem]{Example}
\newtheorem{definition}[theorem]{Definition}
\Crefname{assumption}{Assumption}{Assumptions}
\theoremstyle{remark}
\newtheorem{remark}[theorem]{Remark}
\DeclareMathOperator*{\argmax}{arg\,max}
\newcommand\vzero{\bm{0}}
\newcommand\tsp{\tau}
\newcommand{\bfv}{\mathbf{v}}
\newcommand{\bfw}{\mathbf{w}}
\newcommand{\bfp}{\mathbf{p}}
\newcommand{\bfq}{\mathbf{q}}
\newcommand{\calL}{\mathcal{L}}
\newcommand{\vt}{v^{t}} 
\newcommand{\bfvt}{\mathbf{v}^{t}} 
\newcommand{\bfwt}{\mathbf{w}^{t}} 
\newcommand{\wt}{w^{t}} 
\newcommand{\tvt}{\tilde{v}^{t}} 
\newcommand{\bftvt}{\tilde{\mathbf{v}}^{t}} 
\newcommand{\sigmat}{\sigma^t} 
\newcommand{\barR}{\overline{\mathbb{R}}} 
\newcommand{\GammaSI}{\texttt{Gamma-SI}}
\newcommand{\GammaPD}{\texttt{Gamma-PD}}
\newcommand{\PhiNDZ}{\texttt{Phi-NDZ}}
\newcommand{\myproperty}[1]{\texttt{P}\textsubscript{#1}}
  \author{
    Yutong Wang, Clayton Scott\\
    \texttt{\{yutongw, clayscot\}@umich.edu} \\
    Electrical Engineering and Computer Science \\
    University of Michigan \\
    Ann Arbor, MI 48109
  }
\begin{document}

\maketitle

\begin{abstract}
Gamma-Phi losses constitute a family of multiclass classification loss
functions that generalize
the logistic and other common losses, and have found application in the
boosting literature. We
establish the first general sufficient condition for the
classification-calibration (CC) of such losses. To our knowledge, this
sufficient condition gives the first family of nonconvex multiclass
surrogate losses for which CC has been fully justified. In addition, we
show that a previously proposed sufficient condition is in fact not
sufficient. This contribution highlights a technical issue that is
important in the study of multiclass CC but has been neglected in prior
work.
\end{abstract}

\iftoggle{generic}{}
{
\begin{keywords}%
  Loss functions, classification-calibration
\end{keywords}
}
 %

\begin{sloppypar}

\section{Introduction}

Multiclass classification into $k \ge 2$ categories is one of the most commonly encountered tasks in machine learning.
To formulate the task mathematically, labelled training instances $\{(x_{i},y_{i})\}_{i = 1}^{n}$ are drawn from a joint distribution $P$ over $\mathcal{X} \times [k]$ where $[k] := \{1,\dots,k\}$ and $\mathcal{X}$ is a  space of unlabelled instances.
The goal is to select a \emph{classifier} $h : \mathcal{X} \to [k]$ that makes the fewest mistakes, i.e.,
 the \emph{01-risk}
  \(R_{01,P}(h) := \mathbb{E}_{(X,Y) \sim P} \left[ \mathbb{I}\{Y \ne h(X)\}\right]\) should be as low as possible.
Here, $\mathbb{I}$ denotes the indicator function.
A fundamental strategy for learning a classifier is \emph{empirical risk minimization} (ERM), which selects an \(h\) minimizing the empirical 01-risk
  \(\hat{R}_{01}^{n}(h) := \frac{1}{n}\sum_{i=1}^{n} \mathbb{I}\{y_{i} \ne h(x_{i})\}\) over a class of functions, e.g., histogram classifiers.

Directly minimizing the empirical 01-risk is difficult due to the discrete nature of the objective.
To address this, it is common to employ continuous-valued \emph{class-score functions} $f = (f_{1},\dots, f_{k}): \mathcal{X} \to \mathbb{R}^{k}$,
whose components represent preference for the respective class,
and are used in lieu of the discrete-valued \(h\).
The classifier associated to $f$ is defined as
$\argmax \circ f(x) :=\argmax_{j=1,\dots,k} f_{j}(x)$, where ties are broken arbitrarily.
For selecting \(f\), a \emph{surrogate loss function} $\calL: [k] \times \mathbb{R}^{k} \to \mathbb{R}_{\ge 0}$ is employed. The quantity $\calL(y, f(x))$ is
the loss incurred by $f$ when instance $x$ has label $y$.
The \emph{$\calL$-risk} is defined to be
\(  R_{\calL,P}(f):=\mathbb{E}_{(X,Y) \sim P} [\calL(Y,f(X))]\),
and $f$ is selected by minimizing the \emph{empirical} $\calL$-risk \(\hat{R}_{\calL}^{n}(f)\)  over a class of functions, e.g., neural networks.

It is desirable for the surrogate loss $\calL$ to have the \emph{consistency transfer property}:
Let $P$ be a distribution over $\mathcal{X} \times [k]$.
Suppose that $\{\hat{f}^{(n)}\}_{n}$ is an arbitrary sequence of score functions, e.g., $\hat{f}^{(n)}$ is an empirical $\calL$-risk minimizer, over a class of functions that may depend on \(n\).
Whenever $R_{\calL}(\hat{f}^{(n)}) \to \inf_{f} R_{\calL}(f)$ as $n\to \infty$, we have
$R_{01}(\argmax \circ \hat{f}^{(n)}) \to \inf_{h}R_{01}(h)$ as $n\to \infty$.
  The infimums are, respectively, over all measurable functions $f: \mathcal{X} \to \mathbb{R}^{k}$ and $h : \mathcal{X} \to [k]$.
  
This property justifies \(\calL\)-risk minimization when minimizing the 01-risk is the actual target of interest.
Establishing sufficient conditions for the property is a central goal of the theory of \emph{classification-calibration}  (CC) \citep{zhang2004statistical,tewari2007consistency,duchi2018multiclass}.

The Gamma-Phi losses are a family of losses that have been successfully applied to the design and analysis of multiclass boosting algorithms \citep{beijbom2014guess,saberian2019multiclass}.
Analysis of boosting algorithms have also been approached from the CC theory point of view \citep{bartlett2006adaboost,mukherjee2013theory}.
However, no sufficient condition for the classification calibration of Gamma-Phi losses has been proposed that broadly encompasses a wide range of practical losses.
This work aims to close this gap.
\subsection{Our contributions}

Informally stated, our contributions are:
\begin{compactenum}
  \item
Theorem~\ref{theorem:Gamma-Phi-loss-is-ISC}:
For \(\gamma\) with strictly positive derivative,  and non-increasing \(\phi\) with negative derivative at zero, the associated Gamma-Phi loss \(\calL\) is classification-calibrated.
\item
Theorem~\ref{proposition:counter-example}:
There exists strictly increasing \(\gamma\), and  non-increasing \(\phi\) with negative derivative at zero whose associated Gamma-Phi loss
is \emph{not} classification-calibrated.
\end{compactenum}
The positive result of this work,
Theorem~\ref{theorem:Gamma-Phi-loss-is-ISC},
establishes the first broadly applicable sufficient condition for classification calibration of Gamma-Phi loss.
The negative result of this work,
Theorem~\ref{proposition:counter-example}, shows that the condition that \(\gamma\) has strictly positive derivative cannot be significantly weakened, and that the sufficient condition conjectured by \citet{pires2016multiclass} turns out to be \emph{insufficient}.

\textcolor{black}{The main impact of our theory is that for the first time it establishes CC of \emph{nonconvex} multiclass loss functions, which have been shown to enjoy robustness to label noise \citep{masnadi2008design,amid2019robust} and adversarial contamination \citep{awasthi2021calibration}. For instance, our work establishes CC of the multiclass sigmoid loss ($\gamma = $ identity, $\phi = $ sigmoid). A second impact is that our work calls attention to the importance of considering the boundary of the probability simplex, not just the interior, when proving CC. Several prior works omit this case \citep{zhang2004statistical, zhang2009coherence,amid2019robust}, and thus their proofs are incomplete.
}

Finally, our result provides insight on choosing Gamma-Phi losses that are suitable for multiclass classification.
For instance, Gamma-Phi losses, which have been successful in (offline) multiclass boosting \cite{saberian2019multiclass}, may be useful for  the online regime as well \citep{raman2022online}. See \citet{jung2017online} and the discussion in \S 4.1 therein.

\subsection{Related works}

Gamma-Phi losses were introduced and studied in a series of papers
and have been shown to perform well in boosting
\citep{saberian2019multiclass,saberian2011multiclass,beijbom2014guess}.
Progress towards proving classification-calibration have been made for special instances of Gamma-Phi, namely for the \emph{coherence loss}~\citep{zhang2009coherence} and the \emph{pairwise-comparison} loss~\citep{zhang2004statistical}\footnote{See Remark~\ref{remark:where-zhang-left-off}.
  Our sufficient condition (Theorem~\ref{theorem:Gamma-Phi-loss-is-ISC}) completes and subsumes these partial works.}.

\noindent\textbf{Non-convex multiclass loss functions}.
Many existing sufficient conditions for CC require the components of \(\calL\) to be convex, e.g., in \citet{tewari2007consistency,bartlett2006convexity}.
Gamma-Phi losses in general are non-convex and thus our result
Theorem~\ref{theorem:Gamma-Phi-loss-is-ISC}, which does not require convexity, is complementary to these works.
Non-convex loss have recently received attention in the context of robust learning \citep{huber2011robust}, and
learning with noisy labels \citep{amid2019robust}.

\noindent\textbf{Beyond classification}.
While our work focuses on classification, many works have developed theory for other learning tasks.
\citet{steinwart2007compare} introduced the extension of loss calibration-theory to cost-sensitive classification, regression and unsupervised learning tasks such as density estimation.
\citet{ramaswamy2016convex} developed theory for learning with general discrete losses such as abstention \citep{ramaswamy2018consistent}.
\citet{finocchiaro2019embedding} showed that there exists polyhedral losses that are calibrated with respect to arbitrary discrete losses.

\noindent\textbf{Restricted class of score functions}.
The key result of CC theory relating \(\calL\)-risk and 01-risk minimization assumes working with a sufficiently rich class of score function\textcolor{black}{, i.e., when  $f: \mathcal{X} \to \mathbb{R}^{k}$ range over all measure functions}. Without this assumption, classification-calibration theory is of limited use \citep[\S 9.1]{mukherjee2013theory}.
\textcolor{black}{
To address this,  \cite{duchi2018multiclass}
introduces the notion of \emph{universal-equivalence of loss functions} and extends the key result of CC theory for certain restricted families of \(f\)'s that are ``quantized''.
Moreover, \cite{long2013consistency,zhang2020bayes,awasthi2022multi} study 
\emph{\(\mathcal{H}\)-consistency} for the situation when \(f\) is restricted to some given family \(\mathcal{H}\). \cite{awasthi2022multi} derives generalized regret bounds based on the so-called \emph{minimizability gap}.
}

\subsection{Notations}

Denote by  $k \ge 2$ the number of classes and by
$\Delta^k = \{ \bfp \in \mathbb{R}_{\ge 0}^k: \sum_{j=1}^k p_j= 1\}$
the $k$-probability simplex.
Let $\Delta^k_{\mathtt{desc}} = \{ \bfp \in \Delta^k : p_1 \ge \cdots \ge p_k\}$ denote the set of probability vectors whose entries are non-increasing (descending) with respect to the index.

\noindent\textbf{Operations on vectors}. Let the square bracket with subscript $[\cdot]_j$ be the projection of a vector onto its $j$-th component, i.e., $[\bfv]_j := v_j$ where $\bfv = (v_1,\dots, v_k) \in \mathbb{R}^k$.
Given two vectors $\bfw, \bfv \in \mathbb{R}^{k}$, we write $\bfw \ge \bfv$ if $w_{j } \ge v_{j}$ for all $j \in [k]$.
Likewise,  we write $\bfw > \bfv$ if $w_{j } > v_{j}$ {for all} $j \in [k]$.

\noindent\textbf{Permutations}.
A bijection from $[k]$ to itself is called a \emph{permutation on $[k]$}.
Denote by $\mathtt{Sym}(k)$ the set of all permutations on $[k]$.
We often write $\sigma \sigma'$ instead of $\sigma \circ \sigma'$ for the compositions of two permutations $\sigma, \sigma' \in \mathtt{Sym}(k)$.
For $i,j \in [k]$, let $\tsp_{(i,j)} \in \mathtt{Sym}(k)$ denote the \emph{transposition} which swaps $i$ and $j$, leaving all other elements unchanged.
More precisely, $\tsp_{(i,j)}(i) = j$, $\tsp_{(i,j)}(j) = i$ and $\tsp_{(i,j)}(y) = y$ for $y \in [k] \setminus \{i,j\}$.

\noindent\textbf{Permutation matrices}.
For each $\sigma \in \mathtt{Sym}(k)$, let $\mathbf{S}_{\sigma}$ denote the permutation matrix corresponding to $\sigma$.
In other words, if $\bfv \in \mathbb{R}^{k}$ is a vector, then $[\mathbf{S}_{\sigma} \bfv]_{j} = [ \bfv ]_{\sigma(j)}= v_{\sigma(j)}$.
Below, we abuse notation and simply write \(\sigma(\bfv)\) instead of \(\mathbf{S}_{\sigma}(\bfv)\) when there is no confusion.
Note that if $\sigma, \sigma' \in \mathtt{Sym}(k)$, then $\mathbf{S}_{\sigma \sigma'} = \mathbf{S}_{\sigma} \mathbf{S}_{\sigma'}$.

\section{Background}
In this section, we review the definitions of Gamma-Phi losses and classification-calibration. In the introduction, a multiclass loss is denoted as $\calL: [k] \times \mathbb{R}^{k} \to \mathbb{R}_{\ge 0}$. However, hereinafter, we will use a slightly modified, but mathematically equivalent notation that is more convenient.
\begin{definition}\label{definition:loss-functions}
A \emph{$k$-ary multiclass loss function}
$\calL = (\calL_{1},\dots, \calL_{k}) : \mathbb{R}^k \to \mathbb{R}^k_+$
  is a vector-valued function \textcolor{black}{such that for all \(\bfv \in \mathbb{R}^{k}\) and all \(y,j \in [k]\), we have \(v_y \ge v_j\) implies that \(\calL_y(\bfv) \le \calL_j(\bfv)\)}.
  Given the score vector \(\bfv \in \mathbb{R}^{k}\), the value \(\calL_{y}(\bfv)\) is the loss incurred when the true label is \(y \in [k]\).
  We say that $\calL$ is
    \emph{permutation equivariant}\footnote{
    This property is sometimes referred to as \emph{symmetric} in the literature. \textcolor{black}{However, a symmetric function, say $f$, has the \emph{in}variance property, i.e., \(f(\mathbf{S}_\sigma (\bfv)) = f(\bfv)\), which is different than the notion of \emph{equi}variance as used in \Cref{definition:loss-functions}}. See \citet[\S 3.1]{bronstein2021geometric}, for an 
    in-depth discussion on invariance versus equivariance.}
        if
        $\calL(\mathbf{S}_{\sigma}(\bfv)) = \mathbf{S}_{\sigma}(\calL(\bfv))$
for all $\bfv \in \mathbb{R}^k$ and $\sigma \in \mathtt{Sym}(k)$.
In other words, the classes are viewed symmetrically from the loss function's perspective.
\end{definition}

\begin{definition}
  [Gamma-Phi losses]\label{definition:Gamma-Phi}
  Let $\gamma: \mathbb{R}_{\ge 0} \to \mathbb{R}_{\ge 0}$ be \textcolor{black}{non-decreasing}
  and $\phi: \mathbb{R} \to \mathbb{R}_{\ge 0}$ \textcolor{black}{non-increasing} functions.
 The \emph{Gamma-Phi} loss associated to $\gamma$ and $\phi$ is the loss $\calL \equiv \calL^{\gamma,\phi}$ whose $y$-th component is given by
  \(
    \calL_y(\bfv) := \gamma\Big(\textstyle\sum_{y' \in [k]: y' \ne y} \phi(v_y - v_{y'})\Big).
  \)

%

\end{definition}

  \begin{example}
  When $\gamma(x) := \log(1+x)$ and $\phi(x) := \exp(-x)$, we recover the logistic/cross entropy loss.
  When $\gamma(x) := T\log(1+x)$ and $\phi(x) := \exp((1-x)/T)$ where $T>0$ is a hyperparameter, we recover the \emph{coherence loss}
  used in the
  multiclass \texttt{GentleBoost.C} algorithm~\citep{zhang2009coherence}.
  When $\gamma$ is the identity,  the Gamma-Phi loss reduces to the \emph{pairwise comparison loss}~\citep[Section 4.1]{zhang2004statistical}.
  The multiclass exponential loss, used in \texttt{AdaBoost.MM} \citep{mukherjee2013theory},
  is the pairwise comparison loss when \(\phi(x) := \exp(-x)\).
  When \(\gamma(x) := (x/(1+x))^2\) and \(\phi(x) := \exp(-x)\), we recover the savage loss \citep{saberian2019multiclass}.
  \end{example}

 We now review fundamental definitions and key result of the theory of classification-calibration.

\begin{definition}\label{definition:perm-condition-risk}
  Let \(\bfp \in \Delta^{k}\).
  The \emph{conditional risk} of $\calL$ at \(\bfp\) is the function $C^{\calL}_{\bfp}: \mathbb{R}^k \to \mathbb{R}$ defined by
  \(    C^{\calL}_{\bfp}(\mathbf{v}) = \sum_{y \in [k]} p_y \calL_y(\mathbf{v}).
\)
  The \emph{conditional Bayes risk} is defined as
  $C_{\bfp}^{\calL, *} := \inf_{\bfv \in \mathbb{R}^{k}}C^{\calL}_{\bfp}(\bfv)$.
  When there is no ambiguity about the loss function, we drop the superscript $\calL$ and simply write $ C_{\bfp}(\mathbf{v})$ and $C_{\bfp}^{*}$.
\end{definition}
This ``conditional'' terminology was used in \citet{bartlett2006convexity}.
It was also called \emph{inner $\mathcal{L}$-risk} by \citet{steinwart2007compare}.
The following is from \citet[Definition 1]{zhang2004statistical}.
\begin{definition}\label{definition:ISC}
  A loss $\calL$ is \emph{classification-calibrated}
  if for all $\bfp \in \Delta^k$  and $y$ such that $p_y < \max_j p_j$, we have
  $
  C_{\bfp}^{\calL, *} < \inf \left\{ C_{\bfp}^{\calL}(\bfv): \bfv \in \mathbb{R}^k, \, v_{y} = \max \bfv\right\}
  $.
\end{definition}

Intuitively, the classification-calibration property states that when \(y\) is not the most probable class label, then outputing a score vector \(\bfv\) maximized at \(y\) leads to sub-optimal conditional \(\calL\)-risk.
 Next, we recall the definitions of
 the \emph{01-risk}
  \(R_{01,P}(h) := \mathbb{E}_{(X,Y) \sim P} \left[ \mathbb{I}\{Y \ne h(X)\}\right]\) 
  and the
\emph{$\calL$-risk} 
\(  R_{\calL,P}(f):=\mathbb{E}_{(X,Y) \sim P} [\calL(Y,f(X))]\).
Finally, the key result in classification-calibration theory is

\begin{theorem}[\cite{zhang2004statistical}]\label{theorem: multiclass classification calibration}
  Let $\calL: \mathbb{R}^k \to \mathbb{R}^k_+$ be a permutation equivariant loss function. Let $\mathcal{F}$ be the set of \textcolor{black}{all} Borel functions $\mathcal{X} \to \mathbb{R}^k$.
  If
$\calL$ is classification-calibrated,
  then \(\calL\) has the \emph{consistency transfer property}:
  For all  sequence of function classes $\{\mathcal{F}_n\}_n$ such that $\mathcal{F}_n \subseteq \mathcal{F}$, $\bigcup_n \mathcal{F}_n = \mathcal{F}$, all $\hat f_n \in \mathcal{F}_n$ and all probability distributions $P$ on \(\mathcal{X}\times [k]\)
  \[\textstyle
    R_{\calL,P}(\hat f_n) \overset{P}{\to}  \inf_{f} R_{\calL,P}(f)
    \quad \mbox{implies} \quad
    R_{01,P}({\smash{\argmax}} \circ\hat f_n) \overset{P}{\to} 
    \inf_{h}
    R_{01,P}(h)
  \]
  where the infimums are taken over all Borel functions \(f : \mathcal{X} \to \mathbb{R}^k\) and \(h : \mathcal{X} \to [k]\), respectively.
\end{theorem}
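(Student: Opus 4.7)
The plan is to reduce the global $\calL$-risk versus $01$-risk comparison to a pointwise conditional comparison, upgrade the pointwise CC hypothesis to a uniform ``calibration function'' bound via compactness, and then apply Markov's inequality to transfer $\calL$-risk convergence to $01$-risk convergence. Let $\eta \colon \mathcal{X} \to \Delta^k$ be the conditional label distribution $\eta_y(x) := P(Y = y \mid X = x)$. By the tower property and a measurable selection argument (valid because $\calL$ is Borel and $[k]$ is finite), one has
\[
R_{\calL,P}(f) = \mathbb{E}_X\!\left[C_{\eta(X)}(f(X))\right], \qquad \inf_f R_{\calL,P}(f) = \mathbb{E}_X\!\left[C^*_{\eta(X)}\right],
\]
together with $R_{01,P}(h) = \mathbb{E}_X[1 - \eta_{h(X)}(X)]$ and $\inf_h R_{01,P}(h) = \mathbb{E}_X[1 - \max_j \eta_j(X)]$. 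Writing $h_n := \argmax \circ \hat f_n$, the excess $01$-risk takes the convenient form $\mathbb{E}_X[\max_j \eta_j(X) - \eta_{h_n(X)}(X)]$.

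The key object is the calibration function
\[
\psi(\epsilon) := \inf\!\left\{\, C_\bfp(\bfv) - C^*_\bfp \,:\, \bfp \in \Delta^k,\; \bfv \in \mathbb{R}^k,\; \exists\, y \text{ with } v_y = \max \bfv \text{ and } \max_j p_j - p_y \ge \epsilon \,\right\}.
\]
The main claim, and the chief technical obstacle, is that $\psi(\epsilon) > 0$ for every $\epsilon > 0$. I would argue by contradiction: a vanishing sequence, together with finiteness of $[k]$ and compactness of $\Delta^k$, yields (after extracting subsequences) a fixed index $y$ and a limit $\bfp^\star \in \Delta^k$ with $\max_j p^\star_j - p^\star_y \ge \epsilon > 0$. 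Permutation equivariance enters here to reduce to a canonical choice of $y$ (e.g.\ $y = k$) by applying a fixed permutation matrix to both $\bfp^{(n)}$ and $\bfv^{(n)}$, so the constraint set on $\bfv$ becomes independent of $n$. The residual difficulty is that $\bfv^{(n)}$ ranges over a noncompact set; one handles this with a lower semi-continuity argument after either compactifying via a monotone reparameterization such as $\bfv \mapsto \bfv/(1+\|\bfv\|)$, or by exploiting the coercive/recession behaviour of $\calL$ along the relevant directions (which is automatic for Gamma-Phi losses, since $\phi$ is nonincreasing and $\gamma$ nondecreasing). Either way one extracts a limiting configuration at which $C_{\bfp^\star}(\cdot)$ attains its infimum along the ``wrong-argmax'' set $\{\bfv : v_y = \max \bfv\}$, directly contradicting the CC hypothesis at $\bfp^\star$.

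With $\psi > 0$ in hand, the pointwise inequality
\[
\psi\!\left(\max_j \eta_j(X) - \eta_{h_n(X)}(X)\right) \;\le\; C_{\eta(X)}(\hat f_n(X)) - C^*_{\eta(X)}
\]
holds almost surely by the definition of $\psi$ applied to the triple $(\eta(X), \hat f_n(X), h_n(X))$, since $h_n(X)$ attains the maximum of $\hat f_n(X)$. Because $\psi$ is nondecreasing (its feasible set shrinks as $\epsilon$ grows), integrating and applying Markov's inequality gives
\[
P\!\left(\max_j \eta_j(X) - \eta_{h_n(X)}(X) \ge \epsilon\right) \;\le\; \frac{R_{\calL,P}(\hat f_n) - \inf_f R_{\calL,P}(f)}{\psi(\epsilon)}.
\]
Combining with the trivial splitting $R_{01,P}(h_n) - \inf_h R_{01,P}(h) \le \epsilon + P(\max_j \eta_j(X) - \eta_{h_n(X)}(X) \ge \epsilon)$ yields, for every $\epsilon > 0$,
\[
R_{01,P}(h_n) - \inf_h R_{01,P}(h) \;\le\; \epsilon + \psi(\epsilon)^{-1}\!\left( R_{\calL,P}(\hat f_n) - \inf_f R_{\calL,P}(f) \right).
\]
The hypothesis $R_{\calL,P}(\hat f_n) \overset{P}{\to} \inf_f R_{\calL,P}(f)$ drives the second term to zero in probability, and sending $\epsilon \to 0$ afterwards delivers $R_{01,P}(h_n) \overset{P}{\to} \inf_h R_{01,P}(h)$, completing the proof.
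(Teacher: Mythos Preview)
The paper does not prove this theorem; it is quoted as a background result attributed to \cite{zhang2004statistical}, and no proof (or proof sketch) appears anywhere in the manuscript. There is therefore nothing in the paper to compare your proposal against.

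That said, your outline follows the standard calibration-function route (essentially Zhang's original argument, and close in spirit to Tewari--Bartlett and Steinwart). The pointwise decomposition, the definition of $\psi$, the Markov-inequality step, and the final $\epsilon$-splitting are all correct and constitute the routine part of the proof. The one place where your sketch is genuinely incomplete is exactly where you flag it: the claim that $\psi(\epsilon)>0$. Two specific issues. First, the theorem as stated is for an arbitrary permutation-equivariant loss, so your parenthetical appeal to coercivity of Gamma-Phi losses is off-target here; you need an argument that works for general $\calL$. Second, the ``compactify via $\bfv\mapsto \bfv/(1+\|\bfv\|)$'' idea does not interact with $C_\bfp(\cdot)$ in any obvious way, and extracting a limit $\bfp^\star$ alone is not enough: $\bfp\mapsto C_\bfp^\ast$ is an infimum of linear functions, hence concave and upper semicontinuous, but you also need \emph{lower} semicontinuity of $\bfp\mapsto \inf\{C_\bfp(\bfv):v_y=\max\bfv\}$ along the sequence to conclude a violation of CC at $\bfp^\star$. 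That step is exactly where the unbounded $\bfv^{(n)}$ bite, and it is not handled by the reparameterisation you propose. Zhang's original proof (and the later refinements) deal with this via additional structural assumptions and a more careful argument; if you want a self-contained proof, that is the part you would need to write out in full.
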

\textcolor{black}{In applications, \(\hat f_n\) is often taken to be an \(\calL\)-risk empirical minimizer over a training dataset of cardinality \(n\). However, the above property holds for \emph{any} sequence of functions $\hat f_n \in \mathcal{F}_n$.
}

\begin{remark}
  \citet{zhang2004statistical} refers to  Definition~\ref{definition:ISC} as \emph{infinity-sample consistency} (ISC), while
  later work by \citet{tewari2007consistency} considers a slightly different definition of \emph{classification-calibration} (CC).
  Moreover, \citet[Theorem 2]{tewari2007consistency} shows that CC characterizes the consistency transfer property,
  while \citet[Theorem 3]{zhang2004statistical} only shows that ISC is sufficient.
  In fact, ISC is also necessary.
  While this fact is simple, it has not been explictly stated to the best of our knowledge.
  Therefore, we include its proof in Section~\ref{sec-appendix:equivalence} of the Appendix.
  Throughout this work, we will use the name ``classification-calibration''.
\end{remark}


\section{Main results}\label{section:Gamma-Phi}
In this section, we consider the Gamma-Phi loss as in Definition~\ref{definition:Gamma-Phi}.

\begin{definition}\label{definition:Gamma-conditions}
  Let $\gamma : \mathbb{R}_{\ge 0} \to \mathbb{R}_{\ge 0}$ be a function satisfying
  $\sup_{x \in [0,\infty)} \gamma(x) = +\infty$.
  We say that $\gamma$ satisfies {condition}\footnote{``SI'', ``PD'' and ``NDZ'' stand for \underline{s}trictly \underline{i}ncreasing,  \underline{p}ositive \underline{d}erivative and
    \underline{n}egative \underline{d}erivative at \underline{z}ero, respectively. }
  (\GammaSI) if $\gamma$ is strictly increasing, i.e., $\gamma(x) < \gamma(\tilde{x})$ if $x < \tilde{x}$, and
  {condition (\GammaPD)}  if $\gamma$ is continuously differentiable and $\frac{d\gamma}{dx}(x) >0$ for all $x \ge 0$.
\end{definition}
  Note that condition (\GammaPD) implies condition (\GammaSI), but the converse is not true.

\begin{definition}\label{definition:Phi-condition}
  Let $\phi : \mathbb{R} \to \mathbb{R}_{\ge 0}$ be a function with the property that
  $\inf_{x \in \mathbb{R}} \phi(x) = 0$.
  We say that $\phi$ satisfies {condition (\PhiNDZ)}  if
  $\phi$ is differentiable, $\frac{d\phi}{dx}(x) \le 0$ for all $x \in \mathbb{R}$, and $\frac{d\phi}{dx}(0) < 0$.
\end{definition}

\begin{theorem}\label{theorem:Gamma-Phi-loss-is-ISC}
  Let $\calL \equiv \calL^{\gamma,\phi}$ be the Gamma-Phi loss
  where
  $\gamma$ satisfies \GammaPD,
  and $\phi$ satisfies \PhiNDZ.
  Then $\calL$ is classification-calibrated.
\end{theorem}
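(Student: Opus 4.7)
The plan is to verify Definition~\ref{definition:ISC} by contradiction. I fix $\bfp$ with $p_y < \max_j p_j$, let $A_y := \{\bfv \in \mathbb{R}^k : v_y = \max \bfv\}$, and suppose $\inf_{A_y} C_\bfp = C_\bfp^*$. By permutation equivariance of $\calL$, I would first relabel so that $p_1 > p_y$ for some index $1 \neq y$, introduce the transposition $\sigma = \tsp_{(1,y)}$ and $\bfp' := \mathbf{S}_\sigma \bfp$, and note that equivariance yields $C_{\bfp'}^* = C_\bfp^*$ (via $C_{\bfp'}(\bfv) = C_\bfp(\mathbf{S}_\sigma \bfv)$). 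Writing $S_i(\bfv) := \sum_{j \ne i} \phi(v_i - v_j)$ so that $\calL_i(\bfv) = \gamma(S_i(\bfv))$, a direct computation gives the symmetrization identity
\[
    C_\bfp(\bfv) - C_{\bfp'}(\bfv) = (p_1 - p_y)\bigl(\calL_1(\bfv) - \calL_y(\bfv)\bigr),
\]
which is nonnegative on $A_y$ because $v_y \ge v_1$ makes every summand of $S_1(\bfv) - S_y(\bfv)$ nonnegative (non-increasing $\phi$), hence $\calL_1 \ge \calL_y$ by monotonicity of $\gamma$.

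To upgrade this to strict, I would take a minimizing sequence $\{\bfv^{(n)}\} \subset A_y$ with $C_\bfp(\bfv^{(n)}) \to C_\bfp^*$ and extract from it a genuine minimizer $\bfv^*$ of $C_\bfp$ satisfying $v_1^* = v_y^* = \max \bfv^*$. The identity combined with $C_{\bfp'}^* = C_\bfp^*$ forces $(\calL_1 - \calL_y)(\bfv^{(n)}) \to 0$. Since $p_1 \gamma(S_1(\bfv^{(n)}))$ and $p_y \gamma(S_y(\bfv^{(n)}))$ are both bounded by $C_\bfp(\bfv^{(n)})$ and $\sup \gamma = \infty$, the values $S_1, S_y$ remain in a bounded interval on which $\gamma'$ has a positive lower bound, so the convergence strengthens to $\phi(v_1^{(n)} - v_y^{(n)}) - \phi(v_y^{(n)} - v_1^{(n)}) \to 0$. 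Because $t \mapsto \phi(-t) - \phi(t)$ on $[0, \infty)$ is non-decreasing (by non-increasing $\phi$) and has derivative $-2\phi'(0) > 0$ at $0$, it is strictly increasing near the origin, and this pins $v_y^{(n)} - v_1^{(n)} \to 0$. After normalizing $v_y^{(n)} = 0$ via translation invariance, the same boundedness controls the remaining coordinates, and a subsequence converges to $\bfv^* \in A_y$ with $v_1^* = v_y^* = 0 = \max \bfv^*$ and $C_\bfp(\bfv^*) = C_\bfp^*$.

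I would then perturb $\bfv^*$ along $e_1 - e_y$. Because $v_1^* = v_y^*$, each $\tfrac{d}{d\epsilon} S_i|_0$ vanishes for $i \notin \{1, y\}$, and a short chain-rule calculation gives
\[
    \tfrac{d}{d\epsilon}\, C_\bfp\bigl(\bfv^* + \epsilon(e_1 - e_y)\bigr)\Big|_{\epsilon=0} = (p_1 - p_y)\, \gamma'\!\bigl(S_y(\bfv^*)\bigr)\, \Bigl[2\phi'(0) + \sum_{k \ne 1, y} \phi'(-v_k^*)\Bigr],
\]
which is strictly negative: $p_1 - p_y > 0$, $\gamma'(S_y(\bfv^*)) > 0$ by \GammaPD, and the bracket is at most $2\phi'(0) < 0$ by \PhiNDZ since every $\phi'(-v_k^*) \le 0$. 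Hence $C_\bfp(\bfv^* + \epsilon(e_1 - e_y)) < C_\bfp^*$ for small $\epsilon > 0$, contradicting the definition of $C_\bfp^*$. The main obstacle I foresee is the compactness step extracting $\bfv^*$ from $\{\bfv^{(n)}\}$: it becomes delicate precisely when $\bfp$ lies on the boundary of the simplex (some $p_i = 0$), because then the corresponding $v_i^{(n)}$ can drift to $-\infty$ without affecting the risk, and routing around this is exactly the technical subtlety the paper flags as overlooked in previous work.
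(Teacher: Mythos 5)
Your proposal correctly reproduces the paper's ``easy case'': the transposition identity you derive is precisely \Cref{lemma:transposition-identity}, the argument that boundedness of $C_\bfp$ plus $\sup\gamma = \infty$ forces $S_1, S_y$ into an interval on which $\gamma'$ is bounded below mirrors the mechanics of \Cref{proposition:non-negative-infinity}, and your final perturbation along $e_1 - e_y$ is the content of \Cref{proposition:strict-ordering}. These pieces are sound. The gap is exactly where you flag it, and it is not a peripheral nuisance but the central difficulty that the paper's \Cref{proposition:infimum-attaining-sequences2} is written to resolve.

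Concretely, the step ``the same boundedness controls the remaining coordinates, and a subsequence converges to $\bfv^* \in A_y$'' is false as stated, and the failure is not restricted to $\bfp$ on the boundary of the simplex. After normalizing $v_y^{(n)} = 0$, boundedness of $S_y^{(n)} = \sum_{j\neq y}\phi(-v_j^{(n)})$ is perfectly compatible with $v_j^{(n)} \to -\infty$ for any $j \notin \{1,y\}$, because $\phi(-v_j^{(n)}) \to \inf\phi = 0$ along such a sequence. Boundedness of $S_j^{(n)}$ likewise does not force $v_j^{(n)}$ to stay bounded when $\sup\phi < \infty$ (e.g.\ $\phi$ sigmoid), so even an interior $\bfp$ with all $p_j > 0$ can produce a minimizing sequence with no convergent subsequence in $\mathbb{R}^k$. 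Without a genuine limit point $\bfv^*$ there is no place to apply your first-derivative test, and one cannot ``take a derivative at infinity.'' The paper routes around this by showing (\Cref{proposition:infimum-attaining-sequences2}) that one can modify a minimizing sequence---sorting it, replacing convergent coordinates by their limits, and pushing divergent coordinates to $-\infty$ without changing $\lim_t C_\bfp(\bfvt)$---so that the first $\ell$ coordinates are constant equal to some $\bm\alpha \in \mathbb{R}^\ell$, and then proving that this $\bm\alpha$ is a genuine minimizer of the $\ell$-dimensional conditional risk $C_{\bfq}$ for the renormalized $\bfq$; only then does the derivative argument kick in, via \Cref{proposition:strict-ordering}. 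Your plan is missing this dimensional-reduction step entirely, so the proof does not close.

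A smaller remark: you need $v_1^{(n)} \to v_y^{(n)}$ to pass to a minimizer with $v_1^* = v_y^* = \max\bfv^*$, and your argument for that (via $S_1 - S_y \ge 0$ being a sum of nonnegative terms that tends to $0$, forcing $\phi(v_1-v_y)-\phi(v_y-v_1)\to 0$, and then exploiting $\phi'(0)<0$) is correct. But note that in the paper's organization this step is absorbed into the general construction (see \texttt{P}\textsubscript{4} and \Cref{proposition:non-negative-infinity}), which handles all of $v_1^{(n)},\dots,v_{z-1}^{(n)}$ simultaneously, not just the coordinate you transposed with $y$; this uniformity matters when the problem reduces to an $\ell$-category subproblem with $\ell > 2$.
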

In light of Theorem~\ref{theorem: multiclass classification calibration},
if $\calL$ satisfies the conditions of Theorem~\ref{theorem:Gamma-Phi-loss-is-ISC}, then
\textcolor{black}{$\calL$ satisfies the transfer consistency property}.
  As stated in the introduction, Theorem~\ref{theorem:Gamma-Phi-loss-is-ISC} establishes the first sufficient condition of CC for Gamma-Phi loss.

  \begin{remark}\label{remark:where-zhang-left-off}
  Both the coherence loss \citep{zhang2009coherence} and pairwise comparison loss \citep{zhang2004statistical} where \(\phi\) satisfies (\PhiNDZ)  satisfy the conditions of Theorem~\ref{theorem:Gamma-Phi-loss-is-ISC}.
  On the other hand, the classification-calibration property for these losses has not been established previously due to omissions in the respective proofs.
  In both aforementioned works,
  the proofs
only check the condition in Definition~\ref{definition:ISC} for \(\bfp \in \Delta^{k}\) such that $\bfp > 0$ elementwise.
In this work, we address the case when \(\bfp\) can have zero entries, which requires significant work.
  \end{remark}

  \begin{remark}
    The multiclass savage loss~\citep{saberian2019multiclass} is a Gamma-Phi loss with $\gamma(x) = (x/(1+x))^{2}$ and $\phi(x) = \exp(-2x)$ which does not satisfy the condition of Theorem~\ref{theorem:Gamma-Phi-loss-is-ISC}. More precisely,
the condition
  $\sup_{x \in [0,\infty)} \gamma(x) = +\infty$ fails.
  While the binary savage loss is classification-calibrated~\citep{masnadi2008design},
  to the best of our knowledge it is unknown whether the multiclass savage loss is classification-calibrated.
  \end{remark}
  \color{black}
We present an intuitive proof sketch of Theorem 3.3. The full proof can be found in \Cref{section:proof-positive}.
\begin{proof}[Proof sketch of \Cref{theorem:Gamma-Phi-loss-is-ISC}]
    
    Unwinding the definition, if the loss \(\mathcal{L}\) is not classification-calibrated, then there exists a class-conditional distribution on the labels \(\mathbf{p} \in \Delta^k\), a label \(y \in [k]\), and a sequence \(\{\mathbf{v}^t \in \mathbb{R}^k\}_{t=1,2,\dots}\)  such that \emph{(i)} \(p_y < \max_{j \in [k]} p_j\) is \emph{not} maximal, \emph{(ii)} \(v_y^t = \max_{j \in [k]} v_j^t\) \emph{is} maximal for all \(t\) and \emph{(iii)} the conditional risk
    \(C_{\mathbf{p}}^{\mathcal{L}}(\mathbf{v}^t) \to C_{\mathbf{p}}^{\mathcal{L}, \ast}\) the Bayes conditional risk.

The ``easy'' case is when the sequence \(\{\mathbf{v}^t\}_{t=1,2,\dots}\) has a limit \(\bm{\alpha} \in \mathbb{R}^k\) where \(\alpha_y = \max_{j \in [k]} \alpha_j\) and  \(C_{\mathbf{p}}^{\mathcal{L}}(\bm{\alpha}) = C_{\mathbf{p}}^{\mathcal{L}, \ast}\). Then it is straightforward\footnote{The ``easy'' case was previously addressed in \citet[Theorem 5]{zhang2004statistical}. For the sake of completeness, we include the argument in \Cref{proposition:strict-ordering}.  } to derive a contradiction using the first derivative test for minimality. However, the challenge is that in general the sequence \(\{\mathbf{v}^t \}_{t=1,2,\dots}\) may  diverge\footnote{Note that neither  \citet[Theorem 5]{zhang2004statistical} nor \Cref{proposition:strict-ordering} can be directly applied in the divergent case because doing so would amount to ``taking a derivative at infinity''. Our technique circumvents this by extracting a convergent subvector. It will be interesting to interpret this technique in the  \emph{astral space} formalism \citep{dudik2022convex}.}. 
    
    The major technical contribution of our work is the construction of a modified sequence \(\{\bftvt\}_{t}\), based on \(\{\mathbf{v}^t \}_{t}\), with the following property: there exists some \(\ell \in \{2,\dots, k\}\) such that for all \(t = 1,2,\dots\) the ``subvector'' \((\tvt_1,\dots,\tvt_\ell) = \bm{\alpha} \in \mathbb{R}^\ell\), i.e., the subvector consisting of the first \(\ell\) entries of \(\bftvt\) is constant with respect to \(t\) and equals to \(\bm{\alpha}\).
     Moreover, $C_{\mathbf{q}}^{\mathcal{L}}(\bm{\alpha}) = C_{\mathbf{q}}^{\mathcal{L},*}$, where
      $\mathbf{q} := {(\sum_{j = 1}^\ell p_j)^{-1}}\left( p_1,\dots, {p_\ell}\right)  \in \Delta^\ell_{\mathtt{desc}}$.
    Thus, we have reduced the problem to the ``easy'' case.

    At a high level, our construction proceeds by first showing that the sequence \(\{\mathbf{v}^t\}_t\) converges in the extended-real sense. Then, we proceed with a series of modifications to \(\{\mathbf{v}^t\}_t\) to ensure that there exists an index \( \ell \in [k]\) such that for all \(1\le j \le \ell\) we have \(v_j^t\) is equal to a finite quantity \(\alpha_j \in \mathbb{R}\) for all \(t\), and that for all \(j > \ell\) we have \(\lim_t v_j^t = \pm \infty\) diverges. We show that throughout these modification, the property  \(C_{\mathbf{p}}^{\mathcal{L}}(\mathbf{v}^t) \to C_{\mathbf{p}}^{\mathcal{L}, \ast}\) continues to hold.
\end{proof}

Our analysis highlights the following intuition: if \(\mathcal{L}\) is classification-calibrated for the \(k\)-category problem, then \(\mathcal{L}\) must be classification-calibrated for every \(\ell\)-category subproblem where \(2 \le \ell \le k\).
\color{black}

   Next, we show an example of a Gamma~Phi loss that satisfies the conditions of \citet{pires2016multiclass} and yet is not classification-calibrated.
The paragraph before \citet[Section 3.4.2]{pires2016multiclass} conjectures that
the Gamma-Phi loss is calibrated when $\gamma$ is strictly increasing and $\phi$ satisfies the same condition as in \citet[Theorem 6]{zhang2004statistical}, namely that
$\phi$ is non-negative, non-increasing and $\phi'(0) <0$.
However, in the following example, we give a counterexample to the aforementioned statement.

  \begin{theorem}\label{proposition:counter-example}
 Let $\calL$ be the Gamma-Phi loss   where
  $
    \phi(x) = \exp(-x)
  $
  and
  \[
    \gamma(x) =
    \begin{cases}
      1-(x-1)^2 & : x <1 \\
      2(x-1)^2 +1 & : x \ge 1. \\
    \end{cases}
  \]
  Then \(\gamma\) satisfies (\GammaSI)
  and \(\phi\) satisfies (\PhiNDZ), but
    $\calL$ is not classification-calibrated.
  \end{theorem}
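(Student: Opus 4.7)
The plan is to exhibit an explicit \(\bfp \in \Delta^k\) and a sequence of score vectors whose argmax is the ``wrong'' class yet whose conditional risk attains the Bayes optimum. For concreteness I take \(k = 2\) (the same construction extends to \(k \ge 3\) by appending zeros to \(\bfp\) and sending the extra scores to \(-\infty\)). The structural feature driving the failure is that both pieces of \(\gamma\) have vanishing derivative at \(x = 1\), so \(\gamma'(1) = 0\); this will make the tied configuration \(v_1 = v_2\) a critical point of the conditional risk, and with well-chosen weights the \emph{global} minimum. Verifying \((\GammaSI)\) for \(\gamma\) and \((\PhiNDZ)\) for \(\phi\) is immediate and can be dispatched briefly.

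Take \(\bfp = (p_1, p_2) = (2/5, 3/5) \in \Delta^2\) and \(y = 1\), so \(p_y = 2/5 < 3/5 = \max \bfp\). Since \(\calL\) depends on \(\bfv\) only through pairwise score differences, I reduce to one variable via \(x := e^{v_2 - v_1} \in (0, \infty)\), obtaining
\[
C_{\bfp}(\bfv) \;=\; G(x) \;:=\; p_1 \gamma(x) + p_2 \gamma(1/x),
\]
and translating the constraint \(v_1 = \max \bfv\) to \(x \le 1\). The key claim is that \(G(x) \ge G(1) = 1\) for all \(x > 0\), with strict inequality for \(x \ne 1\). Granted this, \(C_{\bfp}^{*} = \inf_{x > 0} G(x) = 1 = \inf_{x \in (0, 1]} G(x) = \inf\{C_{\bfp}(\bfv) : v_1 = \max \bfv\}\), so the strict inequality in \Cref{definition:ISC} fails and \(\calL\) is not classification-calibrated.

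The main calculation is a two-case split based on the piecewise definition of \(\gamma\). For \(x \in (0, 1)\), using \(\gamma(x) = 1 - (x - 1)^2\) and \(\gamma(1/x) = 2(1/x - 1)^2 + 1\), a short rearrangement yields
\[
G(x) - 1 \;=\; \frac{(1 - x)^2}{x^2}\bigl(2 p_2 - p_1 x^2 \bigr),
\]
which is strictly positive since \(p_1 x^2 < p_1 = 2/5 < 6/5 = 2 p_2\). Symmetrically, for \(x > 1\),
\[
G(x) - 1 \;=\; (x - 1)^2 \bigl(2 p_1 - p_2 / x^2\bigr),
\]
which is strictly positive since \(p_2 / x^2 < p_2 = 3/5 < 4/5 = 2 p_1\). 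Together with \(G(1) = p_1 + p_2 = 1\), these identities establish the claim.

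The hard part is not the algebra but recognizing \emph{why} such a counterexample should exist. A naive first-order analysis yields only \(G'(1) = (p_1 - p_2)\gamma'(1) = 0\), which is inconclusive; what is subtle is that the flat point of \(\gamma\) at \(x = 1\), combined with weights satisfying both \(2 p_1 > p_2\) and \(2 p_2 > p_1\), is precisely what ensures the tie is the global minimum rather than a saddle or local minimum. The choice \(\bfp = (2/5, 3/5)\) is deliberately picked in the regime \(\max \bfp < 2 \min \bfp\) so that no descent direction exists from the tied point. This also makes transparent the tightness of \Cref{theorem:Gamma-Phi-loss-is-ISC}: the hypothesis \((\GammaPD)\) of strictly positive \(\gamma'\) cannot be weakened to \((\GammaSI)\) alone.
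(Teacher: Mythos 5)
Your proof is correct, and it takes a genuinely more direct route than the paper's. The paper establishes the counterexample by invoking the machinery of Proposition 5.5 (infimum-attaining sequences) to first pin down the structure of the minimizing sequence (\(\ell = 2\), \(\bm{\alpha} = (0,0)\)), and then analyzes the one-variable function \(F(x) = r\gamma(\phi(x)) + (1-r)\gamma(\phi(-x))\) by computing \(F'\) piecewise, locating its zeros, and checking sign changes to conclude \(F\) has a unique minimum at \(x=0\). You instead change variables to \(x = e^{v_2 - v_1}\), so the conditional risk becomes \(G(x) = p_1\gamma(x) + p_2\gamma(1/x)\), and you factor the excess risk \(G(x) - 1\) explicitly as \((x-1)^2\) times a manifestly positive coefficient in each of the two regimes \(x < 1\) and \(x > 1\). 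This avoids any derivative or sign-change analysis and makes the role of the flat point \(\gamma'(1) = 0\) transparent. The two choices of weight vector are equivalent up to relabeling: your \(\bfp = (2/5, 3/5)\) and \(y = 1\) is the paper's \(r = 3/5 \in (1/2, 2/3]\) with labels 1 and 2 swapped, and both live in the regime \(\max\bfp < 2\min\bfp\) that makes the factored coefficients positive. Your sketch of the extension to \(k \ge 3\) (append zeros to \(\bfp\), monotonicity of \(\gamma\) and nonnegativity of \(\phi\) give the lower bound \(C_{\bfp}(\bfv) \ge G(e^{v_2-v_1}) \ge 1\), and the sequence \((0,0,-t,\dots,-t)\) attains it) is sound and, in a full write-up, should be spelled out since the theorem as stated concerns general \(k\); but this is exactly the paper's own sequence-at-infinity construction, so the gap is presentational rather than substantive. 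One further simplification your approach buys: for \(k = 2\) the infimum in the constrained set is actually \emph{attained} at \(\bfv = (0,0)\), so no limiting sequence is needed at all.
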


\textcolor{black}{Our proof relies on a careful analysis of the behavior of the loss function \(\calL(\bfv)\) when the argument \(\bfv\) approaches infinity. Moreover, our proof highlights the importance of verifying the main condition in the definition of classification-calibration (\Cref{definition:ISC}) for \(\bfp\) with zero entries.}
The following sections will prove results stated in this section. All omitted proofs of intermediate  results are included in the appendix.

\section{Conditional risks of permutation equivariant losses}\label{section:PERM-losses}

In this section, we study some of the basic properties of the conditional risk (Definition~\ref{definition:perm-condition-risk}) of permutation equivariant losses
(Definition~\ref{definition:loss-functions}).
All omitted proofs can be found in Section~\ref{sec-appendix:PERM-losses} of the Appendix.

\begin{lemma}\label{lemma:equivariance-of-conditional-risk}
  Let $\calL$ be a permutation equivariant loss.
  Let $\sigma \in \mathtt{Sym}(k)$, $\bfv \in \mathbb{R}^k$ and $\bfp \in \Delta^k$ be arbitrary. Then
  \(C_{\bfp}(\bfv) = C_{\sigma(\bfp)}(\sigma(\bfv))\).
  Furthermore, we have $C_{\bfp}^* = C_{\sigma(\bfp)}^*$.
\end{lemma}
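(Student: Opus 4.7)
The plan is to unwind the definitions and use permutation equivariance to reindex the sum defining the conditional risk. First, I would rewrite the equivariance identity $\calL(\mathbf{S}_\sigma(\bfv)) = \mathbf{S}_\sigma(\calL(\bfv))$ componentwise. Using the convention $[\mathbf{S}_\sigma \mathbf{u}]_j = u_{\sigma(j)}$, this reads $\calL_j(\sigma(\bfv)) = \calL_{\sigma(j)}(\bfv)$ for every $j \in [k]$. This is the only computational ingredient I need; everything else is relabeling of indices.

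Given this, I would directly compute
\[
C_{\sigma(\bfp)}(\sigma(\bfv)) \;=\; \sum_{y \in [k]} [\sigma(\bfp)]_y\, \calL_y(\sigma(\bfv)) \;=\; \sum_{y \in [k]} p_{\sigma(y)}\, \calL_{\sigma(y)}(\bfv).
\]
Substituting $y' = \sigma(y)$, which is a valid change of summation index because $\sigma$ is a bijection on $[k]$, the right-hand side becomes $\sum_{y' \in [k]} p_{y'} \calL_{y'}(\bfv) = C_{\bfp}(\bfv)$. This proves the first claim.

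For the second claim, I would take the infimum over $\bfv \in \mathbb{R}^k$ of both sides of the identity $C_{\bfp}(\bfv) = C_{\sigma(\bfp)}(\sigma(\bfv))$ just established. Because the permutation matrix $\mathbf{S}_\sigma$ is a bijection on $\mathbb{R}^k$, the map $\bfv \mapsto \sigma(\bfv)$ is a bijection of $\mathbb{R}^k$, and hence
\[
C_{\bfp}^* \;=\; \inf_{\bfv \in \mathbb{R}^k} C_{\bfp}(\bfv) \;=\; \inf_{\bfv \in \mathbb{R}^k} C_{\sigma(\bfp)}(\sigma(\bfv)) \;=\; \inf_{\bfu \in \mathbb{R}^k} C_{\sigma(\bfp)}(\bfu) \;=\; C_{\sigma(\bfp)}^*.
\]
I do not expect any real obstacle: the lemma is essentially bookkeeping that translates the equivariance of $\calL$ into an invariance of the bilinear pairing $\langle \bfp, \calL(\bfv)\rangle$ under the simultaneous action of $\mathtt{Sym}(k)$ on both arguments.
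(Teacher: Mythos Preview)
Your proposal is correct and mirrors the paper's own proof: both arguments unwind the definition of $C_{\bfp}(\bfv)$, use the componentwise form $\calL_j(\sigma(\bfv)) = \calL_{\sigma(j)}(\bfv)$ of equivariance, and reindex the sum via the bijection $\sigma$; for the infimum, both use that $\bfv \mapsto \sigma(\bfv)$ (equivalently $\sigma^{-1}(\bfv)$) is a bijection on $\mathbb{R}^k$.
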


\begin{lemma}\label{lemma:transposition-identity}
  Suppose that $\mathcal{L}$ is permutation equivariant.
  Let $\bfp \in \Delta^k$, $y,y' \in [k]$ and $\bfv \in \mathbb{R}^k$.
  Let $\tau \in \mathtt{Sym}(k)$ be the \emph{transposition} of $y$ and $y'$, i.e., $\tau(y) = y'$, $\tau(y') = y$ and $\tau(j) = j$ for all $j \in [k] \setminus \{y,y'\}$.
  Then
    $C_{\bfp}(\bfv) - C_{\bfp}(\tau(\bfv)) =
(p_y-p_{y'}) (\calL_y(\bfv) - \calL_{y'}(\bfv)).$
\end{lemma}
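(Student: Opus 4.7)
The plan is a direct computation that exploits permutation equivariance to rewrite $C_{\bfp}(\tau(\bfv))$ in terms of the original loss values $\calL_j(\bfv)$, and then to identify algebraic cancellation between $C_{\bfp}(\bfv)$ and $C_{\bfp}(\tau(\bfv))$.

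First, I would derive the componentwise consequence of permutation equivariance. From Definition~\ref{definition:loss-functions}, $\calL(\mathbf{S}_{\sigma}(\bfv)) = \mathbf{S}_{\sigma}(\calL(\bfv))$ for all $\sigma \in \mathtt{Sym}(k)$, so reading off the $j$-th coordinate via the projection $[\cdot]_j$ and the identity $[\mathbf{S}_\sigma \bfu]_j = u_{\sigma(j)}$ from the Notations section yields $\calL_j(\sigma(\bfv)) = \calL_{\sigma(j)}(\bfv)$. Applying this with $\sigma = \tau$ (which is its own inverse), I get $\calL_j(\tau(\bfv)) = \calL_{\tau(j)}(\bfv)$ for every $j \in [k]$.

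Next, I would expand the conditional risk at $\tau(\bfv)$ using Definition~\ref{definition:perm-condition-risk}:
\[
C_{\bfp}(\tau(\bfv)) = \sum_{j \in [k]} p_j \calL_j(\tau(\bfv)) = \sum_{j \in [k]} p_j \calL_{\tau(j)}(\bfv).
\]
Splitting the sum into the indices $\{y,y'\}$ and the complement $[k]\setminus\{y,y'\}$, I would use $\tau(j) = j$ off of $\{y,y'\}$ to leave those terms unchanged, and $\tau(y)=y'$, $\tau(y')=y$ to rewrite the two remaining terms as $p_y \calL_{y'}(\bfv) + p_{y'} \calL_y(\bfv)$.

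Finally, subtracting from $C_{\bfp}(\bfv) = \sum_{j \notin \{y,y'\}} p_j \calL_j(\bfv) + p_y \calL_y(\bfv) + p_{y'} \calL_{y'}(\bfv)$ cancels the complementary terms and leaves
\[
C_{\bfp}(\bfv) - C_{\bfp}(\tau(\bfv)) = p_y \calL_y(\bfv) + p_{y'} \calL_{y'}(\bfv) - p_y \calL_{y'}(\bfv) - p_{y'} \calL_y(\bfv),
\]
which factors as $(p_y - p_{y'})(\calL_y(\bfv) - \calL_{y'}(\bfv))$, proving the claim. There is no genuine obstacle here; the only subtlety worth flagging is getting the direction of the equivariance identity right, i.e., that $\calL_j(\tau(\bfv)) = \calL_{\tau(j)}(\bfv)$ rather than $\calL_{\tau^{-1}(j)}(\bfv)$ (these happen to coincide since $\tau$ is an involution, but care is needed when reusing this identity later for non-involutive permutations).
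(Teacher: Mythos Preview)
Your proof is correct and follows essentially the same route as the paper: both first extract the componentwise identity $\calL_j(\tau(\bfv)) = \calL_{\tau(j)}(\bfv)$ from permutation equivariance, then expand the conditional risk, use that $\tau$ fixes indices outside $\{y,y'\}$ to cancel those terms, and factor the remaining four-term expression. The only cosmetic difference is that the paper writes the difference $\sum_j p_j(\calL_j(\bfv)-\calL_{\tau(j)}(\bfv))$ directly before splitting, whereas you expand $C_{\bfp}(\tau(\bfv))$ in full first and then subtract.
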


\begin{proposition}
\label{proposition:bubble-sort}
Let $\bfp \in \Delta^k_{\mathtt{desc}}$. Let $\bfv \in \mathbb{R}^k$ be arbitrary. Let $\sigma \in \mathtt{Sym}(k)$ be such that $v_{\sigma(1)} \ge v_{\sigma(2)} \ge \cdots \ge v_{\sigma(k)}$.
Then $C_{\bfp}(\bfv) \ge C_{\bfp}(\sigma(\bfv))$.
\end{proposition}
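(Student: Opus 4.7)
The plan is to reduce to a single adjacent swap and then iterate via bubble sort. I first establish the ``one-swap'' lemma: for any $\bfw \in \mathbb{R}^k$ and any indices $i<j$ in $[k]$ with $w_i \le w_j$, one has $C_{\bfp}(\bfw) \ge C_{\bfp}(\tau_{(i,j)}(\bfw))$. To see this, Lemma~\ref{lemma:transposition-identity} gives
\[
C_{\bfp}(\bfw) - C_{\bfp}(\tau_{(i,j)}(\bfw)) \;=\; (p_i - p_j)\bigl(\calL_i(\bfw) - \calL_j(\bfw)\bigr).
\]
Both factors are nonnegative: $p_i \ge p_j$ since $\bfp \in \Delta^k_{\mathtt{desc}}$ and $i<j$, while $\calL_i(\bfw) \ge \calL_j(\bfw)$ by the monotonicity clause in Definition~\ref{definition:loss-functions} applied to $w_j \ge w_i$.

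Next, I run bubble sort from $\bfv$ to $\sigma(\bfv)$. Explicitly, I would argue that there exists a finite sequence $\bfv = \bfw^{(0)}, \bfw^{(1)}, \dots, \bfw^{(N)} = \sigma(\bfv)$ such that each $\bfw^{(t+1)} = \tau_{(i_t, i_t+1)}(\bfw^{(t)})$ for some index $i_t \in [k-1]$ with $w^{(t)}_{i_t} \le w^{(t)}_{i_t+1}$. This is the standard bubble-sort procedure, which I would justify by an induction on the number of inversions $\#\{(i,j): i<j, \ w_i < w_j\}$: as long as any inversion is present, some adjacent pair $(i,i+1)$ is inverted, and swapping it strictly decreases the number of inversions, while inversions equal zero forces the descending order, which (together with the tie-breaking freedom in $\sigma$) coincides with $\sigma(\bfv)$.

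Applying the one-swap lemma at each step yields $C_{\bfp}(\bfw^{(t)}) \ge C_{\bfp}(\bfw^{(t+1)})$, and telescoping gives the claimed inequality $C_{\bfp}(\bfv) \ge C_{\bfp}(\sigma(\bfv))$.

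The main obstacle is largely notational: matching the indexing conventions of Definition~\ref{definition:loss-functions} (where ``$v_y \ge v_j$ implies $\calL_y \le \calL_j$'') with the descending-order hypothesis on $\bfp$, and verifying that repeated adjacent swaps of out-of-order pairs indeed terminate at a vector that one may take to be $\sigma(\bfv)$ (unambiguous up to ties, which do not affect the conditional risk by permutation equivariance and Lemma~\ref{lemma:equivariance-of-conditional-risk}). The analytic content is fully concentrated in Lemma~\ref{lemma:transposition-identity} and the monotonicity clause.
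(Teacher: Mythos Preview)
Your argument is correct and essentially identical to the paper's: both reduce to adjacent transpositions via bubble sort, invoke Lemma~\ref{lemma:transposition-identity} at each swap, and conclude nonnegativity of $(p_i-p_{i+1})(\calL_i-\calL_{i+1})$ from $\bfp\in\Delta^k_{\mathtt{desc}}$ together with the monotonicity clause in Definition~\ref{definition:loss-functions}. The paper likewise handles the tie-breaking ambiguity by noting that any sorting permutation yields the same sorted vector.
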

\begin{proof}
  This proof is essentially Lemma S3.8 from \citet{wang2020weston} Supplemental Materials.
  First, we note that if $\tilde{\sigma} \in \mathtt{Sym}(k)$ is another permutation such that
  $v_{\tilde{\sigma}(1)} \ge v_{\tilde{\sigma}(2)} \ge \cdots \ge v_{\tilde{\sigma}(k)}$, then $\tilde{\sigma}(\bfv) = \sigma(\bfv)$. Thus, it suffices to prove the result while assuming that the permutation $\sigma$ that sorts $\bfv$ is given by the \emph{bubble sort} algorithm:
  \vspace{1em}
  \begin{compactenum}[L1.]
    \item
      Initialize the iteration index $t \gets 0$ and $\bfv^0 := \bfv$,
    \item While there exists $i \in [k]$ such that $\vt_i  < \vt_{i+1}$,
  do
  \begin{compactenum}
  \item Let $\tau^t = \tau_{(i,i+1)} \in \mathtt{Sym}(k)$ be the permutation that swaps $i$ and $i+1$, leaving other indices unchanged.
  \item  $\bfv^{t+1} \gets \tau^t (\bfv^t)$
\item  $t \gets t + 1$
  \end{compactenum}
\item  Output
  $\bfv^{T}$, where $T \gets t$ is the final iteration index.
  \end{compactenum}
  \vspace{1em}

Let $\langle\cdot, \cdot \rangle$ be the ordinary dot product on $\mathbb{R}^k$. Note that $C_{\mathbf{p}}(\bfv) = \langle \mathbf{p}, \mathcal{L}(\bfv)\rangle$.
Furthermore, at termination, there exists $\sigma \in \mathtt{Sym}(k)$ such that $\bfv^T = \sigma(\bfv)$ is sorted as in the statement of Proposition~\ref{proposition:bubble-sort}.
We claim that at every intermediate step $t \in \{0,\dots, T\}$, we have
$\langle \bfp, \mathcal{L}(\bfv^{t})\rangle \ge
\langle \bfp, \mathcal{L}(\bfv^{t+1})\rangle$.
This would prove
Proposition~\ref{proposition:bubble-sort}, since $\langle \bfp, \mathcal{L}(\bfv^0)\rangle = C_{\mathbf{p}}(\bfv)$ and $\langle \bfp, \mathcal{L}(\bfv^T)\rangle = C_{\mathbf{p}}(\sigma(\bfv))$.

Towards proving our claim, let $t$ be an intermediate iteration of the above ``bubble sort'' algorithm, and let $i \in [k]$ be as in L2.
Then
we have
\begin{align*}
  &\langle \bfp, \mathcal{L}(\bfv^t)\rangle -
  \langle \bfp, \mathcal{L}(\bfv^{t+1})\rangle
  \\
  &=
  \langle \bfp, \mathcal{L}(\bfv^t)\rangle -
  \langle \bfp, \mathcal{L}(\tau^t(\bfv^{t}))\rangle
  \qquad \because \mbox{Definition on L2.(b)}
  \\
  &=
  (p_i - p_{i+1})(\mathcal{L}_{i}(\bfvt)- \mathcal{L}_{i+1}(\bfvt))
  \ge 0, \qquad \mbox{Lemma~\ref{lemma:transposition-identity}}
\end{align*}
as desired. \textcolor{black}{The last inequality follows immediately from \Cref{definition:loss-functions} for multiclass losses.
}
\end{proof}

\section{Proof of Theorem~\ref{theorem:Gamma-Phi-loss-is-ISC}}\label{section:proof-positive}
\textcolor{black}{
In this section, we develop the machinery which will be put together at the end of the section to prove Theorem~\ref{theorem:Gamma-Phi-loss-is-ISC}. The first  goal is to prove \Cref{proposition:infimum-attaining-sequences2}, whose proof sketch was introduced above in  \Cref{section:Gamma-Phi}. Roughly speaking, \Cref{proposition:infimum-attaining-sequences2}  derives properties about sequences \(\{\bfvt\}_t\) such that $\lim_t C_{\bfp}(\bfvt) = C_{\bfp}^{\ast}$. Assuming that \(\calL\) is not classification-calibrated, these properties together with \Crefrange{proposition:strict-ordering}{lemma:non-ISC-property} are then used to derive a contradiction, thereby proving Theorem~\ref{theorem:Gamma-Phi-loss-is-ISC}.} 

Now, to prepare for the proof of  \Cref{proposition:infimum-attaining-sequences2}, we state several helper results:

\begin{lemma}
  \label{lemma:constantization3}
  In the situation of Theorem~\ref{theorem:Gamma-Phi-loss-is-ISC}, let $\{\bfvt\}_t\subseteq \mathbb{R}^k$ be a totally convergent sequence and $\bfp \in \Delta^k$. Then $\lim_t C_{\bfp}(\bfvt)$ exists and is $\in [0,+\infty]$.
  If $\{\bftvt\}_t  \subseteq \mathbb{R}^k$ is another totally convergent sequence such that
  $\lim_t \vt_y - \vt_j = \lim_t \tvt_y - \tvt_j$ for all $y,j \in [k]$, then
  $\lim_t C_{\bfp}(\bfvt) = \lim_t C_{\bfp}(\bftvt)$.
\end{lemma}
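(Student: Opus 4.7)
The plan is to observe that the Gamma-Phi loss component $\calL_y(\bfv) = \gamma(\sum_{y' \ne y} \phi(v_y - v_{y'}))$ depends on $\bfv$ only through the pairwise differences $v_y - v_{y'}$. Consequently, if one extends $\phi$ and $\gamma$ continuously to their closures in the extended reals, then the existence of $\lim_t C_{\bfp}(\bfvt)$ follows from coordinate-wise continuity, and it visibly depends only on the limits $d_{y,j} := \lim_t (\vt_y - \vt_j) \in [-\infty,+\infty]$, which exist by total convergence of $\{\bfvt\}_t$.

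First I would extend $\phi$ and $\gamma$. Under \PhiNDZ, $\phi$ is continuous, non-increasing, and non-negative with $\inf\phi = 0$, so setting $\bar\phi(+\infty) := 0$ and $\bar\phi(-\infty) := \sup\phi \in [0,+\infty]$ yields a continuous map $\bar\phi : [-\infty,+\infty] \to [0,+\infty]$. Under \GammaPD, $\gamma$ is continuous and non-decreasing with $\sup\gamma = +\infty$, so setting $\bar\gamma(+\infty) := +\infty$ yields a continuous map $\bar\gamma : [0,+\infty] \to [0,+\infty]$.

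Next I would pass to the limit inside the loss. Fix $y \in [k]$. Continuity of $\bar\phi$ gives $\phi(\vt_y - \vt_{y'}) \to \bar\phi(d_{y,y'}) \in [0,+\infty]$ for every $y' \ne y$, and because the sum is finite with non-negative summands, $s_y^t := \sum_{y' \ne y} \phi(\vt_y - \vt_{y'})$ converges in $[0,+\infty]$ to $s_y^\infty := \sum_{y' \ne y} \bar\phi(d_{y,y'})$. Continuity of $\bar\gamma$ then yields $\calL_y(\bfvt) = \gamma(s_y^t) \to \bar\gamma(s_y^\infty) =: \ell_y^\infty \in [0,+\infty]$. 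With the standard convention $0 \cdot (+\infty) = 0$, the convex combination $\lim_t C_{\bfp}(\bfvt) = \sum_y p_y \ell_y^\infty$ exists in $[0,+\infty]$, proving the first claim.

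For the second claim, the hypothesis $\lim_t (\vt_y - \vt_j) = \lim_t (\tvt_y - \tvt_j)$ says that $\{\bftvt\}_t$ has the same limit-difference table $d_{y,j}$. Rerunning the previous paragraph with $\bftvt$ in place of $\bfvt$ produces the same intermediate quantities $\bar\phi(d_{y,y'})$, $s_y^\infty$, and $\ell_y^\infty$, hence the same limit for $C_{\bfp}$. The only delicate point is the extended-arithmetic bookkeeping, in particular the convention $p_y \cdot \ell_y^\infty = 0$ when $p_y = 0$, which is precisely what is needed to handle $\bfp$ on the boundary of the simplex; beyond that, everything is a routine continuity argument.
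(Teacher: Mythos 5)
Your argument is correct and is essentially the same as the paper's proof: you extend $\phi$ and $\gamma$ continuously to the extended reals and pass limits through each layer (pairwise differences $\to \bar\phi \to$ finite sum of non-negatives $\to \bar\gamma \to$ convex combination), observing that every intermediate quantity depends on $\{\bfvt\}_t$ only through the limit differences $d_{y,j}$. The paper does the identical bookkeeping but packages it as a four-step reduction invoking two pre-established helper lemmas (one showing that monotone continuous functions commute with limits in $\overline{\mathbb{R}}$, one on sums of sequences with extended-real limits) rather than introducing named extensions $\bar\phi,\bar\gamma$; and for the $p_y=0$ case the paper implicitly uses that $p_y\calL_y(\bfvt)\equiv 0$ along the sequence, which achieves the same effect as your explicit $0\cdot(+\infty)=0$ convention. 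These are stylistic, not substantive, differences; no gap.
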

 The proof of \Cref{lemma:constantization3} can be found in Section \ref{sec-appendix:proof-positive} of the Appendix.
\begin{corollary}
  \label{corollary:constantization2}
In the situation of Theorem~\ref{theorem:Gamma-Phi-loss-is-ISC}, let $\{\bfvt\}_t \subseteq \mathbb{R}^k$ be a totally convergent sequence and
  $S \subseteq [k]$ be a set such that  $\lim_{t} \vt_y \in \mathbb{R}$ for all $y \in S$.
  Define $\{\bftvt\}_t \subseteq \mathbb{R}^k$ by
  $\tvt_j := \vt_j$ if $j \not \in S$ and $\tvt_j := \lim_t \vt_j$ if $j \in S$.
  Then $\lim_t C_p(\bfvt) = \lim_t C_p(\bftvt)$ as elements of $[0,+\infty]$.
\end{corollary}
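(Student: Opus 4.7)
The plan is to invoke \Cref{lemma:constantization3} applied to the pair of sequences $\{\bfvt\}_t$ and $\{\bftvt\}_t$. For this, two hypotheses must be checked: (a) $\{\bftvt\}_t$ is totally convergent, and (b) $\lim_t (\vt_y - \vt_j) = \lim_t (\tvt_y - \tvt_j)$ in $\barR$ for every $y,j \in [k]$. Once both are in hand, \Cref{lemma:constantization3} immediately yields $\lim_t C_{\bfp}(\bfvt) = \lim_t C_{\bfp}(\bftvt)$, which is the desired conclusion.

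Verifying (a) is routine. For each index $j \notin S$, the component $\tvt_j = \vt_j$ converges in $\barR$ by the total convergence of $\{\bfvt\}_t$. For each $j \in S$, the component $\tvt_j = \lim_t \vt_j$ is constant in $t$ and so converges trivially to that (finite) value. Hence every coordinate of $\bftvt$ converges in $\barR$, which is what total convergence requires.

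Verifying (b) proceeds by case analysis on whether $y,j$ lie in $S$. When both $y,j \in S$, both $\tvt_y$ and $\tvt_j$ are constants equal to the finite limits of $\vt_y$ and $\vt_j$, so both difference limits equal $\lim_t \vt_y - \lim_t \vt_j$. When both $y,j \notin S$, the two sequences agree on these coordinates so there is nothing to check. The interesting case is the mixed one, say $y \in S$ and $j \notin S$: here the finite limit of $\vt_y$ can be subtracted from the (possibly infinite) limit of $\vt_j$ without ambiguity, and a direct computation shows the two limit differences coincide.

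I do not anticipate any significant obstacle; the corollary is essentially a convenient repackaging of \Cref{lemma:constantization3}. The one subtle point worth emphasizing is the reason the hypothesis for $y \in S$ is stated as $\lim_t \vt_y \in \mathbb{R}$ (genuine finiteness) rather than mere convergence in $\barR$: this finiteness is exactly what rules out an ill-defined $\infty - \infty$ in the mixed case of the difference condition, and without it the construction of $\bftvt$ would not even make sense.
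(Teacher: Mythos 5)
Your proof is correct and takes essentially the same route as the paper, which disposes of the corollary in one line by observing that $\{\bfvt\}_t$ and $\{\bftvt\}_t$ satisfy the hypotheses of Lemma~\ref{lemma:constantization3}. One small slip: in step (a) you assert that coordinate-wise convergence of $\{\bftvt\}_t$ ``is what total convergence requires,'' but the paper's definition of total convergence also demands that every difference sequence $\{\tvt_y - \tvt_j\}_t$ converge in $\barR$; your case analysis in step (b) supplies exactly that, so once (a) and (b) are read together the verification of total convergence is in fact complete.
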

\begin{proof}
  Note that $\{\bfvt\}_t$ and $\{\bftvt\}_t$ satisfy the conditions of Lemma~\ref{lemma:constantization3}.
\end{proof}

\begin{lemma}\label{proposition:non-negative-infinity}
In the situation of Theorem~\ref{theorem:Gamma-Phi-loss-is-ISC}, let $\calL$ be the Gamma-Phi loss as in Definition~\ref{definition:Gamma-Phi} where $\gamma$ satisfies (\GammaPD) and $\phi$ satisfies (\PhiNDZ).
  Let $\bfp \in \Delta^k$ and $y,y' \in [k]$ be such that $p_{y'} > p_{y}$.
  Suppose $\{\bfvt\}_{t} \subseteq \mathbb{R}^k$ is a sequence where $\liminf_t \vt_{y} - \vt_{y'} > 0$ and $\lim_t C_{\bfp}(\bfvt) < +\infty$ exists.
  Then $\lim_{t}C_{\bfp}(\bfvt) > C^{\ast}_{\bfp}$.
\end{lemma}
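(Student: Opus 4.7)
The plan is to compare the conditional risk of $\bfvt$ against that of $\tau(\bfvt)$, where $\tau$ is the transposition of $y$ and $y'$, and show that the ``swap'' strictly decreases the risk by an amount bounded away from zero. Since $C_{\bfp}^{\ast} \le C_{\bfp}(\tau(\bfvt))$ always, this gap propagates to the limit and yields the desired strict inequality.

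More precisely, \Cref{lemma:transposition-identity} gives
\[
C_{\bfp}(\bfvt) - C_{\bfp}(\tau(\bfvt)) = (p_y - p_{y'})(\calL_y(\bfvt) - \calL_{y'}(\bfvt)) = (p_{y'} - p_y)(\calL_{y'}(\bfvt) - \calL_y(\bfvt)),
\]
so it suffices to produce a constant $\eta > 0$ such that $\calL_{y'}(\bfvt) - \calL_y(\bfvt) \ge \eta$ for all sufficiently large $t$: indeed, together with $p_{y'} - p_y > 0$ and $C_{\bfp}^{\ast} \le C_{\bfp}(\tau(\bfvt))$, this will yield $C_{\bfp}^{\ast} \le C_{\bfp}(\bfvt) - (p_{y'}-p_y)\eta$, and passing to the limit gives the conclusion.

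To construct $\eta$, write $A_t := \sum_{j \neq y}\phi(\vt_y - \vt_j)$ and $B_t := \sum_{j \neq y'}\phi(\vt_{y'} - \vt_j)$, so that $\calL_y(\bfvt) = \gamma(A_t)$ and $\calL_{y'}(\bfvt) = \gamma(B_t)$. Fix $\delta > 0$ and $T$ so that $\vt_y - \vt_{y'} \ge \delta$ for $t \ge T$. Since $\phi$ is non-increasing, the terms with $j \notin \{y, y'\}$ contribute non-negatively to $B_t - A_t$, and the distinguished term gives $\phi(\vt_{y'} - \vt_y) - \phi(\vt_y - \vt_{y'}) \ge \phi(-\delta) - \phi(\delta)$. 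The condition $\phi'(0) < 0$ together with monotonicity yields $\phi(-\delta) \ge \phi(0) > \phi(\epsilon') \ge \phi(\delta)$ for any small $\epsilon' \in (0, \delta)$, so $\epsilon_0 := \phi(-\delta) - \phi(\delta) > 0$ and hence $B_t - A_t \ge \epsilon_0$ for $t \ge T$.

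By the mean value theorem applied to $\gamma$, there is $\xi_t \in [A_t, B_t]$ with $\gamma(B_t) - \gamma(A_t) = \gamma'(\xi_t)(B_t - A_t)$. The main obstacle, and the step that uses the full strength of the hypotheses, is to show $\gamma'(\xi_t)$ is bounded below by a positive constant: since $\gamma'$ is only assumed continuous and positive on $[0, \infty)$, it can decay to zero at infinity, so we must confine $\xi_t$ to a compact set. For this, observe $p_{y'}\gamma(B_t) \le C_{\bfp}(\bfvt)$, and since $\lim_t C_{\bfp}(\bfvt) < +\infty$ and $p_{y'} > 0$, the quantity $\gamma(B_t)$ is bounded above. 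The standing assumption $\sup_{x\ge 0}\gamma(x) = +\infty$ combined with $\gamma$ being strictly increasing and continuous makes $\gamma : [0, \infty) \to [\gamma(0), \infty)$ a homeomorphism onto its image, so $B_t$ is bounded, say $B_t \le M$. Hence $\xi_t \in [0, M]$, on which $c := \min_{[0,M]} \gamma' > 0$. Setting $\eta := c\,\epsilon_0$ completes the argument.
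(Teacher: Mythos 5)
Your proof is correct and follows essentially the same route as the paper's: both compare $C_{\bfp}(\bfvt)$ against $C_{\bfp}(\tau(\bfvt))$ via the transposition identity, produce the positive gap $\phi(-\delta) - \phi(\delta)$ from the distinguished pair, and use the boundedness of $B_t$ (forced by $\lim_t C_{\bfp}(\bfvt) < \infty$, $p_{y'} > 0$, and $\sup\gamma = \infty$) to get a uniform lower bound on $\gamma'$ over a compact set. The only cosmetic difference is that you invoke the mean value theorem where the paper uses the Fundamental Theorem of Calculus.
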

\begin{proof}
  Suppose that $\lim_t C_{\bfp}(\bfvt) = C_{\bfp}^*$.
  We show that this leads to a contradiction.
  Since  $p_{y'} > p_y \ge 0$ and $\lim_t C_{\bfp}(\bfvt) < +\infty$,
  we have $\limsup_t p_{y'} \gamma\left(\sum_{j \in [k] \setminus \{y'\}} \phi(\vt_{y'} - \vt_j)\right) < \infty$.
  By our assumption on $\gamma$, we have
  \(M:=\limsup_t \sum_{j \in [k] \setminus \{y'\}} \phi(\vt_{y'} - \vt_j) < \infty.\)

Next, by assumption, there exists $\epsilon >0$ such that $\vt_y \ge  \vt_{y'}+ \epsilon$ for all $t \gg 0$.
Below, we assume $t$ is in this sufficiently large regime.
Hence, for all $j \in [k]$ we have $\vt_y - \vt_j > \vt_{y'} - \vt_j$ and consequently
$\phi(\vt_y - \vt_j) \le \phi(\vt_{y'} - \vt_j)$.
Furthermore,
$\vt_y- \vt_{y'} \ge \epsilon > 0 > -\epsilon \ge \vt_{y'} - \vt_y$ and so
$\phi(\vt_y - \vt_{y'}) \le \phi(\epsilon) < \phi(-\epsilon) \le \phi(\vt_{y'} - \vt_y)$.
Let
$
a^{t} :=
\sum_{j \in [k]\setminus\{ y'\}}
\phi(\vt_{y'} - \vt_j)
$ and $b^{t}:=
\sum_{j \in [k]\setminus \{y\}}
\phi(\vt_y - \vt_j)
$.
Furthermore, define $\tilde a^t := \phi(-\epsilon) + \sum_{j \in [k] \setminus \{y,y'\}} \phi(\vt_{y'} - \vt_j)$
and $\tilde b^t := \phi(\epsilon) + \sum_{j \in [k] \setminus \{y,y'\}} \phi(\vt_y - \vt_j)$.
Observe that \[
  \tilde a^{t}-\tilde b^{t}
  =
  \phi(-\epsilon) - \phi(\epsilon)
+
\sum_{j \in [k] \setminus \{y,y'\}}
\phi(\vt_{y'} - \vt_j)
-
\phi(\vt_y - \vt_j)
  \ge
  \phi(-\epsilon) - \phi(\epsilon).
\]
In summary, we have $0 \le b^t \le \tilde b^t \le \tilde a^t \le a^t \le M < \infty$.
Let $\tau \in \mathtt{Sym}(k)$ be the permutation that swaps $y$ and $y'$.
By
Lemma~\ref{lemma:transposition-identity}, we have
    \[C_p(\bfvt) - C_p(\tau(\bfvt)) =
(p_y-p_{y'}) (\calL_y(\bfvt) - \calL_{y'}(\bfvt))
=
(p_y-p_{y'}) (\gamma(a^t) - \gamma(b^t)).\]
By the Fundamental Theorem of Calculus, we have
\begin{align*}
\gamma(a^t) - \gamma(b^t)
&=
\int_{b^t}^{a^t} \gamma'(x) dx
\ge
\int_{\tilde b^t}^{\tilde a^t} \gamma'(x) dx
\ge
(\tilde a^t - \tilde b^t) \inf_{x \in [\tilde b^t, \tilde a^t]} \gamma'(x)
\\
&\ge
(\phi(-\epsilon) - \phi(\epsilon)) \inf_{x \in [0,M]} \gamma'(x).
\end{align*}
Since $\gamma$ satisfies (\GammaPD), we have \(\gamma'>0\) and so $\delta := \inf_{x \in [0,M]} \gamma'(x) > 0$.
Thus,
\[
\lim_{t \to \infty}
C_{\bfp}(\bfvt)
-C_{\bfp}(\tau(\bfvt))
\ge
(p_y- p_{y'})
(\phi(-\epsilon) - \phi(\epsilon))
\delta
> 0
\]
where the right hand side is a positive quantity independent of $t$.
Therefore, we conclude that
\(\lim_{t \to \infty}
C_{\bfp}(\bfvt)
>
\lim_{t \to \infty}
C_{\bfp}(\tau(\bfvt)),
\)
which is a contradiction of $\lim_{t \to \infty} C_{\bfp}(\bfvt) =  C_{\bfp}^*$.
\end{proof}


Before proceeding, we adopt the notation
$\{\vt\}_{t} \equiv \alpha$ to denote that $\vt = \alpha$ for all $t$, where
$\{\vt\}_{t}\subseteq \mathbb{R}$ is a sequence of real numbers and $\alpha \in \mathbb{R}$ is a constant.
The first part of the following proposition applies under slightly weaker assumptions than that of
Theorem~\ref{theorem:Gamma-Phi-loss-is-ISC}, namely the (\GammaPD) assumption is replaced by the weaker (\GammaSI). The proposition will be used again in
Section~\ref{section:Gamma-Phi-non-calibrated-example}
in the proof of Theorem~\ref{proposition:counter-example}.
\begin{proposition}\label{proposition:infimum-attaining-sequences2}
  \begin{sloppypar}
    Let $\calL$ be a Gamma-Phi loss where $\gamma$ satisfies (\GammaSI) and $\phi$ satisfies (\PhiNDZ).
    Let $\bfp \in \Delta^k_{\mathtt{desc}}$ and $z \in [k]$ be such that
    $C_{\mathbf{p}}^* = \inf \{ C_{\mathbf{p}}(\bfv) : \bfv \in \mathbb{R}^k, \, v_z = \max \bfv\} $.
    Then there exists a sequence $\{\bfvt\}_t \subseteq \mathbb{R}^{k-1}$ satisfying the following properties:
  \end{sloppypar}
  \begin{compactenum}
    \item
      $\lim_{t \to \infty} C_{\mathbf{p}}^\mathcal{L}(\bfvt) = C_{\mathbf{p}}^*$
    \item there exists an index $\ell \in [k]$ and a
      vector
  $
  \bm{\alpha}: = (\alpha_{1},\dots, \alpha_\ell) \in \mathbb{R}^{\ell}
    $
    such that
          for each $j \in \{1,\dots, \ell\}$ we have $\{\vt_j\} \equiv \alpha_j$ and
          $\lim_t \vt_j = -\infty$ for $j > \ell$.
          In addition, $\alpha_1 = 0$.
    \item
      Let
      $\mathbf{q} := {(\sum_{j = 1}^\ell p_j)^{-1}}\left( p_1,\dots, {p_\ell}\right)  \in \Delta^\ell_{\mathtt{desc}}$.
    Then $C_{\mathbf{q}}(\bm{\alpha}) = C_{\mathbf{q}}^*$.
  \end{compactenum}
  Furthermore, suppose
   $z > 1$, $p_{z-1} > p_z$,
  and
$\gamma$ satisfies (\GammaPD).
  Then $\{\bfvt\}$ can be chosen such that
    $\alpha_j = 0$ for all $j \in [z]$ .
\end{proposition}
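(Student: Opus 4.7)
The plan is to start with an arbitrary sequence witnessing the infimum and progressively reshape it using the translation invariance of the Gamma-Phi loss together with the helper results already developed. Let $\{\bfwt\}_t \subseteq \mathbb{R}^k$ be any sequence with $w_z^t = \max_{j} w_j^t$ and $C_{\bfp}(\bfwt) \to C_{\bfp}^{\ast}$. Because each $\calL_y(\bfv)$ depends only on the pairwise differences of the entries of $\bfv$, I may translate to normalize $w_z^t = 0$, giving $w_j^t \le 0$ for all $j,t$; a diagonal subsequence extraction then yields $\lim_t w_j^t \in [-\infty, 0]$ for every $j \in [k]$.

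Define $B := \{j \in [k] : \lim_t w_j^t > -\infty\} \ni z$ and $\ell := |B|$. Applying \Cref{corollary:constantization2}, I replace each $B$-coordinate by its limit to produce a sequence $\{\bfu^t\}_t$ whose $B$-entries are constant in $t$ and which still satisfies $\lim_t C_{\bfp}(\bfu^t) = C_{\bfp}^{\ast}$. For all sufficiently large $t$, every $B$-coordinate of $\bfu^t$ exceeds every non-$B$-coordinate, so a permutation $\sigma^t$ sorts $\bfu^t$ in descending order, placing a fixed tuple $\bmalpha = (\alpha_1,\dots,\alpha_\ell)$ in the first $\ell$ positions while the remaining $k-\ell$ positions diverge to $-\infty$. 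Since $\bfp \in \Delta^k_{\mathtt{desc}}$, \Cref{proposition:bubble-sort} guarantees $C_{\bfp}(\sigma^t(\bfu^t)) \le C_{\bfp}(\bfu^t)$, so $\bfvt := \sigma^t(\bfu^t)$ satisfies properties (1) and (2); the normalization $w_z^t = 0$ together with $z \in B$ forces $\alpha_1 = 0$.

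The main obstacle is property (3), since $C_{\bfp}^{\ast}$ is only known as the $k$-class infimum. Suppose for contradiction that $\bmbeta \in \mathbb{R}^\ell$ satisfies $C_{\bfq}(\bmbeta) < C_{\bfq}(\bmalpha)$, and construct $\bfr^t \in \mathbb{R}^k$ by placing $\bmbeta$ in the first $\ell$ positions and copying the last $k-\ell$ entries from $\bfvt$. Unwinding the Gamma-Phi definition, for $j \le \ell$ we get $\calL_j(\bfr^t) \to \tilde{\calL}_j(\bmbeta)$ and $\calL_j(\bfvt) \to \tilde{\calL}_j(\bmalpha)$, where $\tilde\calL$ is the $\ell$-class loss built from the same $\gamma, \phi$; the ``outside'' $\phi$-terms vanish because the corresponding coordinates diverge to $-\infty$ relative to the finite $\alpha_{j'}$ and $\beta_{j'}$. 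For $j > \ell$, the asymptotic leading $\phi$-contributions of $\bfvt$ and $\bfr^t$ agree; the subtle case $\sup\phi = +\infty$ is handled by noting that finiteness of $C_{\bfp}^{\ast}$ then forces $p_j = 0$ for every $j > \ell$, whence these indices contribute nothing. Summing yields
\[
\lim_t \big[C_{\bfp}(\bfr^t) - C_{\bfp}(\bfvt)\big] \;=\; \Big(\textstyle\sum_{j=1}^{\ell} p_j\Big)\big[C_{\bfq}(\bmbeta) - C_{\bfq}(\bmalpha)\big] \;<\; 0,
\]
contradicting $\lim_t C_{\bfp}(\bfr^t) \ge C_{\bfp}^{\ast}$.

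For the ``Furthermore'' claim, $\bfp \in \Delta^k_{\mathtt{desc}}$ together with $p_{z-1} > p_z$ forces $p_j > p_z$ for every $j < z$. Under (\GammaPD), \Cref{proposition:non-negative-infinity} applied with $y = z$ and $y' = j$ forbids $\liminf_t (w_z^t - w_j^t) > 0$; combining with $w_z^t \ge w_j^t$ and taking a common subsequence, $w_j^t \to 0$ for every $j < z$. Thus indices $1, \dots, z$ all lie in $B$ with limit $0$, so after the descending sort they fill the top $z$ positions of $\bfvt$, yielding $\alpha_j = 0$ for $j \in [z]$.
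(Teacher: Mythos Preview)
Your proposal is correct and follows essentially the same strategy as the paper: normalize so the maximum coordinate is zero, pass to a subsequence with componentwise extended-real limits, freeze the finitely-limiting coordinates via \Cref{corollary:constantization2}, sort using \Cref{proposition:bubble-sort}, and then establish item~3 by comparing against a competitor built from an arbitrary $\bm\beta$. The paper performs the sort before the constantization (and then re-extracts a totally convergent subsequence), whereas you constantize first and sort afterward; both orderings work, and your order makes it slightly more transparent that the first $\ell$ coordinates are already fixed once the sort is applied. For item~3 the paper gives a direct decomposition $\lim_t C_{\bfp}(\bfvt) = S\cdot C_{\bfq}(\bm\alpha) + A$ and shows the analogous tail term $B$ for the competitor equals $A$ by a uniform computation using $\phi(-\infty)$; your contradiction argument and case split on whether $\sup\phi$ is finite arrive at the same conclusion, though the paper's route avoids needing to argue separately that $\gamma$ applied to two asymptotically equal bounded sums yields an $o(1)$ difference. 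Your handling of the ``Furthermore'' clause---applying \Cref{proposition:non-negative-infinity} to the pre-sort sequence where $w_z^t=0$ is still the $z$-th coordinate---is arguably cleaner than the paper's, which invokes the same lemma after sorting.
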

\begin{proof}
  Let $\{\bfvt\}_t \subseteq \mathbb{R}^{k-1}$ be a sequence such that $\lim_{t \to \infty} C^{\mathcal{L}}_{\mathbf{p}}(\bfvt) = C_{\mathbf{p}}^*$ and $\vt_z = \max \bfvt$ for all $t \in \mathbb{N}$.
  A sequence \(\{\bfvt\}_{t}\) satisfying this preceding condition is said to have property \myproperty{0}.
  Throughout this proof, $t$ denotes the index of the sequence where ``for all $t$'' means ``for all $t \in \mathbb{N}$''.
  We will repeatedly modify the sequence $\bfvt$ until all properties \myproperty{1} to \myproperty{3} below are met in addition to \myproperty{0}.
  Under the assumptions in the ``Furthermore'' part of the Proposition,
  we will continue to modify the sequence  until
  properties \myproperty{4} and \myproperty{5} marked by ``\textasteriskcentered'' are further satisfied.

  \vspace{1em}

  \underline{Properties}:
  \begin{compactenum}[\color{white}1.]
  \item
    \label{step:att-inf-leader-is-zero}
    \texttt{P}\textsubscript{1}.
    $\max \bfvt = 0$ for all $t$
  \item
    \label{step:att-inf-totally-convergent}
    \texttt{P}\textsubscript{2}.
    $\{\bfvt\}_t$ is totally convergent and
  $\{\vt_j\}_t$ has a limit in $[-\infty,0]$ for each $j \in[k]$
  \item\label{step:att-inf-ell}
    \texttt{P}\textsubscript{3}.
  there exists an $\ell \in [k]$  such that for each $j \in [\ell]$, we have $\{\vt_j\} \equiv \alpha_j$ where $\alpha_j \in (-\infty, 0]$ and for each $j \in [k]\setminus [\ell]:= \{\ell+1,\dots, k\}$, we have $\lim_t \vt_j = -\infty$.
  In fact,
  $\ell \in [k]$ is the largest index such that $\lim_t \vt_\ell > -\infty$.
\item
    \label{step:att-inf-top-z-are-zeros}
    \texttt{P}\textsubscript{4}\textasteriskcentered.
  the sequence $\{\vt_j\}_t \equiv 0$
  for each $j \in [z]$
  \item\label{step:att-inf-ell-gte-z}
    \texttt{P}\textsubscript{5}\textasteriskcentered.
   $\ell \ge z$.
  \end{compactenum}

  \vspace{1em}



  \noindent\underline{Properties \ref{step:att-inf-leader-is-zero} and \ref{step:att-inf-totally-convergent}}.
  First, note that $C_p(\bfv) = C_p(\bfv- c \mathbf{1})$ for any $c \in \mathbb{R}$ and any $\bfv \in \mathbb{R}^k$.
  Replacing each $\bfvt$ by $\bfvt - (\max \bfvt) \mathbf{1}$ for all $t$, we may assume $\vt_z = \max \bfvt = 0$ for all $t$.
  In particular, $\vt_j \in (-\infty,0]$ for all $j \in [k]$ and $t$.
  Passing to a subsequence if necessary, we may assume that $\{\bfvt\}_t$ is totally convergent (Lemma~\ref{lemma:every-sequence-has-totally-convergent-subsequence}) whose components have limits in \([-\infty,0]\).

\vspace{1em}

\noindent\underline{Property~\ref{step:att-inf-ell}}.
  Let $\sigmat \in \mathtt{Sym}(k)$ be the permutation that sorts $\bfvt$ in non-increasing order
  as in
Proposition~\ref{proposition:bubble-sort},
  i.e., $\vt_{\sigmat(1)} \ge \cdots \ge \vt_{\sigmat(k)}$.
  By Proposition~\ref{proposition:bubble-sort}, $C_{\bfp}(\sigmat(\bfvt)) \le C_{\bfp}(\bfvt)$ and hence $\lim_t C_{\bfp}(\sigmat(\bfvt)) = C_{\bfp}^*$ as well.
  We now replace $\bfvt$ by $\sigmat(\bfvt)$.
Note that \myproperty{1} is preserved by the sorting, but  \myproperty{2} may no longer hold.
Replacing by a subsequence if necessary, we can that $\bfvt$ is totally convergent (Lemma~\ref{lemma:every-sequence-has-totally-convergent-subsequence}).
Since \myproperty{1} holds,  we have $\lim_t \vt_j \in [-\infty,0]$ and hence \myproperty{2} holds.

By Property~\ref{step:att-inf-totally-convergent}. By the sorting in the preceding paragraph, we have that $\lim_t \vt_1 \ge \cdots \ge \lim_t \vt_k$.
  Now, let $\ell \in [k]$ be the largest index such that $\lim_t \vt_\ell > -\infty$.
  Such an index exists because $\lim_t \vt_1 =\lim_t \max \bfvt =  0$ (due to \myproperty{1}).
  Let $\alpha_j := \lim_t \vt_j \in (-\infty,0]$ for each $j \in \{1,\dots, \ell\}$.
  Define $\bftvt$ such that $\{\tvt_j\}_t \equiv \alpha_j$ for $j \in \{1,\dots, \ell\}$ and $\{\tvt_j\}_t = \{\vt_j\}_t$ for $j > \ell$.
  Then by Corollary~\ref{corollary:constantization2}, we have
  $\bftvt$ is totally convergent,
  and $\lim_t C_{\bfp}(\bftvt) = \lim_t C_{\bfp}(\bfvt)$.
  Replace $\vt$ by $\tvt$.
  Thus, we have that \(\{\bfvt\}_{t}\) satisfies \myproperty{1} to \myproperty{3}.

\vspace{1em}

\noindent\underline{Property \ref{step:att-inf-top-z-are-zeros}}.
  By \myproperty{1}, we already have $\vt_z = \max \bfvt = 0$.
  By the assumption that $p_{z-1} > p_z$  and that $\bfp \in \Delta^k_{\mathtt{desc}}$, we have $p_j > p_z$ for each $j \in [z-1]$.
  Furthermore, by \myproperty{2}, \(\{\bfvt\}_{t}\) is totally convergent.
  Hence, for a fixed \(j \in [z-1]\) by definition $\vt_z - \vt_j = -\vt_j$ has a limit in $[0,\infty]$.
  By Lemma~\ref{proposition:non-negative-infinity}, $\lim_t -\vt_j \not \in (0,\infty]$ and thus $\lim_t -\vt_j = 0$.
  Now, define the sequence $\{\bftvt\}_{t}$ by
  \[
    \tvt_j :=
    \begin{cases}
      0 &: j \in \{1,\dots, z-1\}\\
      \vt_j &: j \in \{z,\dots, k\}
    \end{cases}
  \]
  for all $t$.
  By Corollary~\ref{corollary:constantization2}, $\{\bftvt\}_t$ is also totally convergent and $\lim_t C_{\mathbf{p}}(\bftvt) = \lim_t C_{\mathbf{p}}(\bfvt)$.
  Thus,
  $\lim_t C_{\bfp}(\bftvt) = C_{\bfp}^*$.
  Replacing $\bfvt$ by $\bftvt$, we have that \myproperty{4} holds.
  Clearly, \myproperty{0}, \myproperty{1} and \myproperty{2} all hold.
  Note that \(\lim_{t} \bftvt = \lim_{t} \bfvt\) by construction. Thus, \myproperty{3} still holds.
  Moreover, the \(\ell\) defined in \myproperty{3} is not changed when \(\bfvt\)  is replaced by \(\bftvt\).

  \vspace{1em}

\noindent\underline{Property~\ref{step:att-inf-ell-gte-z}}.
By \myproperty{4}, $\{\vt_j\} \equiv 0$ for each $j \in [z]$. Hence, by the definition of $\ell$ in \myproperty{3}, we have $\ell \ge z$.

\vspace{1em}
  We now proceed with the rest of the proof for Proposition~\ref{proposition:infimum-attaining-sequences2}.
  Consider the sequence $\{\bfvt\}_t$ constructed as above satisfying \myproperty{0} through \myproperty{3}. Then items 1 and 2 of Proposition~\ref{proposition:infimum-attaining-sequences2} are implied by \myproperty{0} and \myproperty{3} respectively. Now, if the assumptions in the ``Furthermore'' part hold, then the conclusion of the ``Furthermore'' part holds by \myproperty{4}.
  It only remains to check item 3 of
Proposition~\ref{proposition:infimum-attaining-sequences2}.
  Below, we write $[k] \setminus [\ell] := \{\ell+1,\dots, k\}$.
  Now, note that
  \begin{align}
    &
    \lim_t
    C_{\mathbf{p}}(
    \bfvt)
      \nonumber
    \\
    &=
    \sum_{y \in [k]} p_y \gamma\Big(\sum_{j \in [k]: j \ne y} \phi(\lim_t \vt_y - \vt_j)\Big)
      \nonumber
    \\
    &=
    \sum_{y \in [\ell]} p_y \gamma\Big(\sum_{j \in [k]: j \ne y} \phi(\lim_t \vt_y - \vt_j)\Big)
    +
    \underbrace{
      \sum_{y \in [k]\setminus [\ell]} p_y \gamma\Big(\sum_{j \in [k]: j \ne y} \phi(\lim_t \vt_y - \vt_j)\Big)
  }_{=: A}
  \label{equation:aux-quantity-A}
    \\
    &=
    (\underbrace{p_1+\cdots + p_\ell}_{=: S})\sum_{y \in [\ell]} q_y \gamma\Big(\sum_{j \in [k]: j \ne y} \phi(\lim_t \vt_y - \vt_j)\Big)
    +
    A.
  \label{equation:aux-quantity-S}
  \end{align}
  \textcolor{black}{Note that \(A\) is defined as a limit. At this point, it is unclear if this limit exists and is \(\in [0,\infty)\). This will become clear after 
  Equation~\eqref{equation:Cp-is-S-times-Cq-plus-A} below.
  }
  But first,
   we focus on the $\lim_t \vt_y - \vt_j$ case by case\footnote{  The case $y \in  [k]\setminus [\ell] ,\, j \in [k]\setminus [\ell]$ is omitted from the expression because  $\lim_t \vt_y - \vt_j$ cannot be simplified further.
}:
\[
\lim_t \vt_y - \vt_j
=
\begin{cases}
  \alpha_y - \alpha_j &: y \in [\ell],\, j \in [\ell]\\
  \alpha_y- \lim_t \vt_j = +\infty &: y \in [\ell],\, j \in [k]\setminus [\ell]\\
  \lim_t \vt_y - \alpha_j = -\infty &: y \in [k]\setminus [\ell] ,\, j \in [\ell].
\end{cases}
  \]
  Now,
  \[
 \phi( \lim_{t} \vt_y - \vt_j)
    =\begin{cases}
      \phi(\alpha_y - \alpha_j) &: j \in [\ell] \\
      \phi(+\infty) = 0 &: j \in \{\ell+1,\dots, k\}.
    \end{cases}
  \]

  Putting it all together, we have
  \begin{align}
    \lim_t C_{\mathbf{p}}(\bfvt)
      \nonumber
    &=
    S\sum_{y \in [\ell]} q_y \gamma(\sum_{j \in [k]: j \ne y} \phi(\lim_t \vt_y - \vt_j))
    +
    A
      \nonumber
    \\
    &=
    S\sum_{y \in [\ell]} q_y \gamma(\sum_{j \in [\ell]: j \ne y} \phi(\alpha_y - \alpha_j) + \sum_{j \in [k]\setminus [\ell]: j \ne y} \phi(+\infty))
    +
    A
      \nonumber
    \\
    &=
    S\sum_{y \in [\ell]} q_y \gamma(\sum_{j \in [\ell]: j \ne y} \phi(\alpha_y - \alpha_j))
    +
    A
      \nonumber
    \\
    &=
    S \cdot C_{\mathbf{q}}(\bm{\alpha})
    +
    A.
    \label{equation:Cp-is-S-times-Cq-plus-A}
  \end{align}
Note that \(C_{\mathbf{p}}^*
     =
    \lim_t C_{\mathbf{p}}(\bfvt) \in [0,\infty)
    \) and \(S \cdot C_{\mathbf{q}}(\bm{\alpha}) \in [0,\infty)\). Thus, the limit and defines \(A\) exists and is \(\in[0,\infty)\).
  Now, let $\bm{\beta} = (\beta_1,\dots, \beta_\ell) \in \mathbb{R}^\ell$ be arbitrary and define a sequence $\{\bfwt\} \subseteq \mathbb{R}^k$ by
  \[
    \wt_j :=
    \begin{cases}
      \beta_j &: j \in [\ell]\\
      \vt_j &: j \in [k]\setminus [\ell].
    \end{cases}
  \]

  Then analogous to Equation~\eqref{equation:aux-quantity-A}
  above, we have the decomposition
  \[
    \lim_t C_{\mathbf{p}}( \bfwt)
    =
    \sum_{y \in [\ell]} p_y \gamma\Big(\sum_{j \in [k]: j \ne y} \phi(\lim_t \wt_y - \wt_j)\Big)
    +
    \underbrace{
      \sum_{y \in [k]\setminus [\ell]} p_y \gamma\Big(\sum_{j \in [k]: j \ne y} \phi(\lim_t \wt_y - \wt_j)\Big)
  }_{=: B}.
  \]
  We claim that $A = B$ and $\lim_t C_{\mathbf{p}}(\wt) = S\cdot C_{\mathbf{q}}(\bm{\beta}) + A$.
  We first prove that $A = B$. To this end, observe that
  \[
\lim_t \wt_y - \wt_j
=
\begin{cases}
  \beta_y - \beta_j &: y \in [\ell],\, j \in [\ell]\\
  \beta_y- \lim_t \vt_j = +\infty &: y \in [\ell],\, j \in [k]\setminus [\ell]\\
  \lim_t \vt_y - \beta_j = -\infty &: y \in [k]\setminus [\ell] ,\, j \in [\ell]\\
  \lim_t \vt_y - \vt_j&: y \in [k]\setminus [\ell] ,\, j \in [k]\setminus [\ell].
\end{cases}
  \]
  In particular, for $y \in [k]\setminus [\ell],j \in [\ell]$, we have
$
\lim_t \wt_y - \wt_j
=-\infty
=
\lim_t \vt_y - \vt_j$.
  Thus,
  \begin{align*}
    B &=
    \sum_{y \in [k]\setminus [\ell]} p_y \gamma\Big(
    \sum_{j \in [\ell]: j \ne y} \phi(\lim_t \wt_y - \wt_j)
    +
    \sum_{j \in [k]\setminus [\ell]: j \ne y} \phi(\lim_t \wt_y - \wt_j)
    \Big)
    \\
      &=
    \sum_{y \in [k]\setminus [\ell]} p_y \gamma\Big(
    \sum_{j \in [\ell]: j \ne y} \phi(-\infty)
    +
    \sum_{j \in [k]\setminus [\ell]: j \ne y} \phi(\lim_t \vt_y - \vt_j)
    \Big)
    \\
      &=
      A.
  \end{align*}
  Next, we have
  \begin{align*}
    \lim_t C_{\mathbf{p}}( \bfwt)
    &=
    \sum_{y \in [\ell]} p_y \gamma\Big(\sum_{j \in [k]: j \ne y} \phi(\lim_t \wt_y - \wt_j)\Big)
    +
    A\\
    &=
    S \sum_{y \in [\ell]} q_y \gamma\Big(\sum_{j \in [\ell]: j \ne y} \phi(\beta_y - \beta_j)
+
\sum_{j \in [k]\setminus [\ell]: j \ne y} \phi(+\infty)\Big)
    +
    A
    \\
    &= S\cdot C_{\mathbf{q}}(\bm{\beta})  + A.
  \end{align*}
  Since $\lim_t  C_{\mathbf{p}}(\bfwt) \ge \lim_t C_{\mathbf{p}}(\bfvt) = C_{\mathbf{p}}^*$,
  we have $C_{\mathbf{q}}(\bm{\beta}) \ge C_{\mathbf{q}}(\bm{\alpha})$.
  Since $\bm{\beta}$ is arbitrary, this proves that $C_{\mathbf{q}}(\bm{\alpha}) = C_{\mathbf{q}}^*$.
\end{proof}
\begin{corollary}\label{lemma:Cp-is-Cq}
  \begin{sloppypar}
    Let $\calL$ be a Gamma-Phi loss  where $\gamma$ satisfies (\GammaSI) and $\phi$ satisfies  (\PhiNDZ).
    Let $\{\bfvt\}_t$ be any sequence satisfying items 1, 2 and 3 of Proposition~\ref{proposition:infimum-attaining-sequences2}.
    If $p_y = 0$ for each $y > \ell$, then $C_{\bfq}(\bm{\alpha}) = \lim_t C_{\bfp}(\bfvt)$.
  \end{sloppypar}
\end{corollary}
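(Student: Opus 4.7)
The plan is to exploit the hypothesis $p_y = 0$ for all $y > \ell$ in two ways: at finite $t$ it trivializes the ``divergent'' block $y > \ell$ of the $k$-ary risk, and since probabilities sum to one it forces $S := \sum_{j=1}^{\ell} p_j = 1$, so that $q_y = p_y$ for every $y \in [\ell]$.

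First I would observe that, for every $t$, the full sum $C_{\bfp}(\bfvt) = \sum_{y \in [k]} p_y \calL_y(\bfvt)$ collapses to $\sum_{y \in [\ell]} p_y \calL_y(\bfvt)$, because the $y > \ell$ summands are identically zero. Then, for each fixed $y \in [\ell]$, I would use item 2 of \Cref{proposition:infimum-attaining-sequences2}: $\vt_y = \alpha_y$ and $\vt_j = \alpha_j$ for all $t$ and $j \in [\ell]$, while $\vt_j \to -\infty$ for $j > \ell$. Hence
\[
\sum_{j \ne y} \phi(\vt_y - \vt_j) = \sum_{j \in [\ell]\setminus\{y\}} \phi(\alpha_y - \alpha_j) + \sum_{j > \ell} \phi(\alpha_y - \vt_j) \longrightarrow \sum_{j \in [\ell]\setminus\{y\}} \phi(\alpha_y - \alpha_j)
\]
as $t \to \infty$, since $\phi$ is continuous with $\inf \phi = 0$. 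Continuity of $\gamma$ then gives $\calL_y(\bfvt) \to \gamma\bigl(\sum_{j \in [\ell]\setminus\{y\}} \phi(\alpha_y - \alpha_j)\bigr)$, which is the $y$-th component of the ``$\ell$-ary'' Gamma-Phi loss evaluated at $\bm{\alpha}$. Summing over $y \in [\ell]$ with weights $p_y = q_y$ then yields $\lim_t C_{\bfp}(\bfvt) = S \cdot C_{\bfq}(\bm{\alpha}) = C_{\bfq}(\bm{\alpha})$, which is the claim.

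The subtlety to watch is an indeterminate form $0 \cdot \infty$: routing the argument through Equation~\eqref{equation:Cp-is-S-times-Cq-plus-A} and trying to read off $A = 0$ directly is delicate, because for $y > \ell$ the quantity $\gamma\bigl(\sum_{j \ne y} \phi(\lim_t \vt_y - \vt_j)\bigr)$ can equal $+\infty$ (for instance when $\phi(x) = e^{-x}$ makes $\phi(-\infty) = +\infty$), so each summand of $A$ has the form $0 \cdot \infty$. Carrying out the $y$-summation at finite $t$, where every term is finite, before passing to the limit in $t$, sidesteps this cleanly and is the only real care point in an otherwise routine argument.
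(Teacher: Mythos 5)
Your proof is correct and takes essentially the same route as the paper: the paper simply cites the decomposition $\lim_t C_{\bfp}(\bfvt) = S \cdot C_{\bfq}(\bm{\alpha}) + A$ already established in Equation~\eqref{equation:Cp-is-S-times-Cq-plus-A} and notes that $p_y = 0$ for all $y > \ell$ forces $S = 1$ and $A = 0$; you re-derive this decomposition in the special case at hand. Your caution about the $0 \cdot \gamma(+\infty)$ form is reasonable as an expositional matter, and your fix --- carry out the sum over $y$ at finite $t$, where every $\calL_y(\bfvt)$ is a finite real number and the $y>\ell$ terms vanish identically, and only then let $t \to \infty$ --- is exactly the right move. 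That said, the paper's terse ``clearly $A = 0$'' is in fact sound under the paper's own reading of $A$: in the display preceding Equation~\eqref{equation:Cp-is-S-times-Cq-plus-A}, $A$ is introduced as a limit (the text explicitly remarks that ``$A$ is defined as a limit''), namely $A = \lim_t \sum_{y \in [k]\setminus[\ell]} p_y \calL_y(\bfvt)$, and each finite-$t$ summand is the unambiguous product $0 \cdot \calL_y(\bfvt) = 0$, so the indeterminate form never actually arises. In short, the two arguments coincide; yours spells out the step the paper dismisses as clear.
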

\begin{proof}
  In Equation~\eqref{equation:Cp-is-S-times-Cq-plus-A}, we showed that
  $
    \lim_t C_{\mathbf{p}}(\bfvt)
    =
    S \cdot C_{\mathbf{q}}(\bm{\alpha})
    +
    A
    $
    where $S$ and $A$ are defined on Equations~\eqref{equation:aux-quantity-S} and
    \eqref{equation:aux-quantity-A} respectively.
    If $p_y = 0$ for all $y > \ell$, then clearly $S = 1$ and $A = 0$.
\end{proof}

  \begin{lemma}\label{proposition:strict-ordering}
In the situation of Theorem~\ref{theorem:Gamma-Phi-loss-is-ISC}, suppose that $\bfq \in \Delta^\ell$ and $\bm{\alpha} \in \mathbb{R}^\ell$ are such that $\bm{\alpha}$ is a minimizer of
    $C_{\bfq}(\cdot)$ and $\alpha_1 = \alpha_2$. Then $q_1 = q_2$.
  \end{lemma}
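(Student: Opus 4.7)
The plan is to apply the first-order necessary condition for optimality. Since $\bm{\alpha}$ is an unconstrained minimizer of $C_{\bfq}$ on $\mathbb{R}^\ell$, and $C_{\bfq}$ is differentiable (because both $\gamma$ and $\phi$ are differentiable under \GammaPD\ and \PhiNDZ), the gradient of $C_{\bfq}$ at $\bm{\alpha}$ vanishes. In particular, the partial derivatives with respect to $\alpha_1$ and $\alpha_2$ are both zero, and the strategy is to subtract the two equations so that, after using $\alpha_1 = \alpha_2$, most of the terms cancel and the desired identity $q_1 = q_2$ pops out.

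Concretely, writing $a_y := \sum_{j \ne y} \phi(\alpha_y - \alpha_j)$, the chain rule gives
\[
\frac{\partial C_{\bfq}}{\partial \alpha_i}(\bm{\alpha})
= q_i\,\gamma'(a_i) \sum_{j \ne i} \phi'(\alpha_i - \alpha_j)
\;-\; \sum_{y \ne i} q_y\,\gamma'(a_y)\,\phi'(\alpha_y - \alpha_i).
\]
Using $\alpha_1 = \alpha_2$, a direct computation shows that $a_1 = a_2 =: a$ and that $\sum_{j \ne 1} \phi'(\alpha_1 - \alpha_j) = \sum_{j \ne 2} \phi'(\alpha_2 - \alpha_j) =: S$. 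Moreover, in each of the two sums $\sum_{y \ne i} q_y\,\gamma'(a_y)\,\phi'(\alpha_y - \alpha_i)$ for $i = 1,2$, the terms indexed by $y \notin \{1,2\}$ agree, while the remaining terms equal $q_2\,\gamma'(a)\,\phi'(0)$ and $q_1\,\gamma'(a)\,\phi'(0)$, respectively.

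Setting both partials equal to zero and subtracting therefore gives
\[
0 = (q_1 - q_2)\,\gamma'(a)\,\bigl(S + \phi'(0)\bigr).
\]
The last step is to argue that the second and third factors are nonzero. By \GammaPD, $\gamma'(a) > 0$. By \PhiNDZ, $\phi'(x) \le 0$ for all $x \in \mathbb{R}$ and $\phi'(0) < 0$, so
\[
S + \phi'(0) \;=\; 2\phi'(0) + \sum_{j \notin \{1,2\}} \phi'(\alpha_1 - \alpha_j) \;\le\; 2\phi'(0) \;<\; 0.
\]
Thus $q_1 - q_2 = 0$, as claimed. No step here is a serious obstacle; the only thing to be careful about is organizing the chain-rule computation cleanly so that the cancellation between the $\alpha_1$ and $\alpha_2$ partial derivatives is transparent, and verifying that the sign conditions in \GammaPD\ and \PhiNDZ\ are exactly what is needed to conclude the two factors are nonzero.
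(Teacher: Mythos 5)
Your proof is correct and follows the same strategy as the paper's: set the first-order conditions $\partial C_{\bfq}/\partial\alpha_1 = \partial C_{\bfq}/\partial\alpha_2 = 0$, use $\alpha_1 = \alpha_2$ to force $a_1 = a_2$ and the cross-terms to match, subtract, and conclude $(q_1 - q_2)\,\gamma'(a)\bigl(2\phi'(0) + \sum_{j\notin\{1,2\}}\phi'(\alpha_1-\alpha_j)\bigr) = 0$ with the last two factors nonzero by \GammaPD\ and \PhiNDZ. The paper phrases the subtraction via a $2\times 2$ linear system in $(q_1,q_2)$, but the underlying computation and sign argument are identical to yours.
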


  \begin{lemma}
    \label{lemma:non-ISC-property}
  Suppose $\calL$ is not classification-calibrated.
  Then
  there exists a probability vector $\mathbf{p} \in \Delta^k_{\mathtt{desc}}$ and
  an index $z \in \{2,\dots, k\}$  satisfying 1) $p_{z-1} > p_z$ and 2)
  $C_{\mathbf{p}}^* = \inf \{ C_{\mathbf{p}}(\bfv) : \bfv \in \mathbb{R}^k, \, v_z = \max \bfv\} $.
  \end{lemma}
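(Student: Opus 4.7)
The plan is to start from the raw witness of non-calibration given by \Cref{definition:ISC} (some $\bfp_0 \in \Delta^k$ and a non-maximal label $y_0$) and, via two applications of permutation equivariance (\Cref{lemma:equivariance-of-conditional-risk}), upgrade it to a sorted probability vector $\bfp \in \Delta^k_{\mathtt{desc}}$ together with a max-coordinate index $z$ that sits at a strict drop $p_{z-1} > p_z$.

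First I would unpack \Cref{definition:ISC} to obtain $\bfp_0 \in \Delta^k$ and $y_0 \in [k]$ with $p_{0,y_0} < \max_j p_{0,j}$ and $C_{\bfp_0}^* = \inf\{C_{\bfp_0}(\bfv) : v_{y_0} = \max \bfv\}$. I would then choose $\sigma \in \mathtt{Sym}(k)$ so that $\bfp := \sigma(\bfp_0) \in \Delta^k_{\mathtt{desc}}$, and set $z' := \sigma^{-1}(y_0)$. Using the paper's convention $[\sigma(\bfv)]_j = v_{\sigma(j)}$, the condition $v_{y_0} = \max \bfv$ translates precisely to $[\sigma(\bfv)]_{z'} = \max \sigma(\bfv)$, so \Cref{lemma:equivariance-of-conditional-risk} gives $C_{\bfp}^* = C_{\bfp_0}^* = \inf\{C_{\bfp}(\bfw) : w_{z'} = \max \bfw\}$. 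Since $p_{z'} = p_{0,y_0} < p_1 = \max_j p_{0,j}$, we automatically have $z' \ge 2$.

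Next, if $p_{z'-1} > p_{z'}$ already, I would set $z := z'$ and be done. Otherwise, let $z$ be the smallest index in $\{2,\dots,z'\}$ with $p_z = p_{z'}$; by minimality of $z$ we have $p_{z-1} > p_z$, while by construction $p_z = p_{z'}$, so the transposition $\tau := \tau_{(z,z')}$ fixes $\bfp$. A second application of \Cref{lemma:equivariance-of-conditional-risk} shows $C_{\bfp}(\bfv) = C_{\bfp}(\tau(\bfv))$, and the map $\bfv \mapsto \tau(\bfv)$ bijects $\{\bfv : v_{z'} = \max \bfv\}$ with $\{\bfw : w_z = \max \bfw\}$, so the two infima are equal. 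Hence $C_{\bfp}^* = \inf\{C_{\bfp}(\bfw) : w_z = \max \bfw\}$, as required.

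There is essentially no substantive obstacle beyond bookkeeping with the action of permutations on coordinates; the only mildly subtle point to watch is the convention $[\sigma(\bfv)]_j = v_{\sigma(j)}$, which forces the choice $z' = \sigma^{-1}(y_0)$ (not $\sigma(y_0)$) so that the max-coordinate condition transforms correctly.
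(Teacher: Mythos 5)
Your proof is correct and takes essentially the same approach as the paper: sort $\bfp_0$ by a permutation $\sigma$, relocate the non-maximal label to a position $z$ that sits at the start of its block of equal probability values, and invoke \Cref{lemma:equivariance-of-conditional-risk} to transport the infimum. The paper's only cosmetic differences are that it packages the sorting permutation and the follow-up transposition into a single composed map applied once, and phrases the argument in terms of a minimizing sequence rather than directly on the infima; your two-step version (and the direct bijection-of-infima reasoning) is if anything slightly cleaner.
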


Proofs of both lemmas can be found in Section \ref{sec-appendix:proof-positive} of the Appendix. Now, we conclude with the
  \begin{proof}[Proof of Theorem~\ref{theorem:Gamma-Phi-loss-is-ISC}]
    Let $\bfp \in \Delta^k_{\mathtt{desc}}$ and $z \in \{2,\dots, k\}$ be as in
Lemma~\ref{lemma:non-ISC-property}, which states that $\bfp$ and $z$ satisfy the conditions of
Proposition~\ref{proposition:infimum-attaining-sequences2}.
Next, let $\ell \in [k]$, $\bm{\alpha} \in \mathbb{R}^\ell$, and $\bfq \in \Delta^{\ell}_{\mathtt{desc}}$ be as in Proposition~\ref{proposition:infimum-attaining-sequences2}, which satisfy $C_{\mathbf{q}}(\bm{\alpha}) = C_{\mathbf{q}}^*$ and $q_z < q_{z-1} \le  q_1 = \max \bfq$.
Let $\tau \in \mathtt{Sym}(\ell)$ be the permutation which swaps $z$ and $2$ leaving all elements in $[\ell] \setminus \{2,z\}$ unchanged.
Then
\[
C_{\tau(\bfq)}^* = C_{\bfq}^* = C_{\bfq}(\bm{\alpha}) = C_{\tau(\bfq)}(\tau(\bm{\alpha})).
\]
Let $\tilde{\bfq}:= \tau(\bfq)$ and $\tilde{\bm{\alpha}} := \tau(\bm{\alpha})$.
Then $[\tilde{\bm{\alpha}}]_1 =[\bm{\alpha}]_{\tau(1)} =  \alpha_1 =0$ and $[\tilde{\bm{\alpha}}]_2 =  [\bm{\alpha}]_{\tau(2)} = \alpha_z =0$.
In particular, $\tilde{\alpha}_1 = \tilde{\alpha}_2$.
Thus, by
Lemma~\ref{proposition:strict-ordering}, we have $\tilde{q}_1 = \tilde{q}_2$. However,
$\tilde{q}_1 = [\bfq]_{\tau(1)} = q_1$ and $\tilde{q}_2 = [\bfq]_{\tau(2)} = q_z$. Since $q_z < q_1$, we have a contradiction.
  \end{proof}

  \section{Proof of Theorem~\ref{proposition:counter-example}: A Gamma-Phi loss that is not classification-calibrated}\label{section:Gamma-Phi-non-calibrated-example}

    For $r \in [\tfrac{1}{2},1]$, define
    $\bfp := [r,1-r, 0,\dots,0]  \in \Delta^k_{\mathtt{desc}}$.
    Thus, for $\bfv \in \mathbb{R}^k$, we have
    \begin{equation}
      C_{\bfp}(\bfv) =
      r \gamma\Big(\sum_{j \in [k] \setminus \{1\}} \phi(v_1 - v_j)\Big)
      + (1-r) \gamma\Big(\sum_{j \in [k]\setminus \{2\}} \phi(v_2 - v_j)\Big).
      \label{equation:counter-example-eq1}
    \end{equation}
    Below, fix
\(r \in (\tfrac{1}{2},\tfrac{2}{3}]\) once and for all.
    Denote by $\mathtt{SEQ}$ the set of all sequences
    $\{\bfvt\}_t$ satisfying Proposition~\ref{proposition:infimum-attaining-sequences2} items 1, 2 and 3.
    For each sequence $\{\bfvt\}_t \in \mathtt{SEQ}$, there exists an $\ell \in [k]$ (defined as in Proposition~\ref{proposition:infimum-attaining-sequences2} item 2) such that $\lim_t \vt_j = -\infty$ if and only if $j \in [k]$ satisfies $j > \ell$.
    Below, we choose a particular sequence \(\{\bfvt\}_{t}\):
    \begin{equation}
      \label{equation:minimality-of-ell}
      \mbox{
        Fix a sequence $\{\bfvt\}_t \in \mathtt{SEQ}$ such that $\ell$ is as small as possible.
      }
    \end{equation}
    Furthermore, let $\bm{q} \in \Delta^{\ell}_{\mathtt{desc}}$, $\bm{\alpha} \in \mathbb{R}^\ell$ be from
    Proposition~\ref{proposition:infimum-attaining-sequences2} item 3.
    Recall that  we have $\lim_t C_{\bfp}(\bfvt) = C_{\bfp}^*$ and that $C_{\bfq}(\bm{\alpha}) = C_{\bfq}^*$.
    Furthermore,
    Proposition~\ref{proposition:infimum-attaining-sequences2} asserts that $\alpha_{1}=\vt_1 = 0$.

 We prove
in Lemma~\ref{lemma:counter-example-helper} in the Appendix
    that in fact
      $\ell = 2$ and $\bm{\alpha} = (0,0)$.
      To sketch the main idea here briefly, the key step is showing that
                          \(F(x) := r \gamma\left(\phi( x)\right) + (1-r) \gamma\left(\phi(-x) \right)\)
                          has a unique minimum at \(x = 0\).
                          The \(\gamma\) and \(\phi\) in Theorem~\ref{proposition:counter-example} is chosen specifically to achieve this.

                            Now, define another sequence $\{\bfwt\}_{t= 1}^\infty \subseteq \mathbb{R}^k$ where
                            \(
\bfwt = (0, \tfrac{1}{t}, -t,\dots, -t)
                            \).
                            Then  by construction we have
                            \(\lim_t C_{\bfp} (\bfwt)
                              =
                              C_{\bfq}(\bm{\alpha})
                              =
                              C_{\bfq}^{\ast}
                              = C_{\bfp}^*\)
                        and $\argmax_{j \in [k]} \wt_j = 2$ for all $t$.
                        Thus, we have constructed an example of $\bfp$ and $y \in [k]$ where
                        \(C_{\bfp}^* = \inf \{ C_{\bfp}(\bfv): \bfv \in \mathbb{R}^k, \, v_y = \max \bfv\}\).
                        This shows that $\calL$ is not classification-calibrated (Definition~\ref{definition:ISC}).
Thus, we have proven Theorem~\ref{proposition:counter-example}.
\hfill \qedsymbol{}

\section{Discussion}

\begin{sloppypar}
  In this work, we establish the first sufficient condition for the classification calibration of a Gamma-Phi loss in terms of the functional properties of \(\gamma\) and \(\phi\) in Theorem~\ref{theorem:Gamma-Phi-loss-is-ISC}.
  We also showed that our sufficient condition cannot be significantly weakened (in terms of the condition on $\gamma$) via our counter-example in Proposition~\ref{proposition:counter-example}.
\end{sloppypar}

\begin{sloppypar}
  For future work, studying the connection between non-convex Gamma-Phi losses and \(\mathcal{H}\)-consistency \citep{awasthi2022multi} may be fruitful.
  Another interesting direction is the relationship between non-convex Gamma-Phi losses and learning with label noise \citep{amid2019robust}.
  \textcolor{black}{Finally, an important future direction is to establish a concrete regret/excess risk bound.  While \cite{zhang2004statistical} guarantees the existence of such a risk bound, the proof is not constructive. Deriving concrete bounds is likely to require additional assumptions on \(\gamma\) and \(\phi\).}
\end{sloppypar}
\iftoggle{generic}{
\section*{Acknowledgements}
}{
\acks{The authors were supported in part by the National Science Foundation under awards 1838179
and 2008074, and by the Department of Defense, Defense Threat Reduction Agency under award
HDTRA1-20-2-0002. YW is supported by the Eric and Wendy Schmidt AI in Science Postdoctoral
Fellowship, a Schmidt Futures program.}
}

\bibliography{references}

\newpage

\appendix

\section{The ISC property characterizes consistency transfer property}\label{sec-appendix:equivalence}

\begin{proposition}
  Let $\calL: \mathbb{R}^k \to \mathbb{R}^k_+$ be a multiclass loss function that does not have the ISC property, and $\mathcal{F}$ be the set of Borel functions $\mathcal{X} \to \mathbb{R}^k$.
  Then $\calL$ does not have the consistency transfer property, namely:
  There exists a sequence of functions $\hat f_n \in \mathcal{F}$ and a probability distribution $P$ on \(\mathcal{X}\times [k]\) such that
  \[
  \textstyle
    R_{\calL,P}(\hat f_n) \overset{P}{\to}  \inf_{f} R_{\calL,P}(f)
    \mbox{ holds but }
    R_{01,P}({\smash{\argmax}} \circ\hat f_n) \overset{P}{\to} \inf_{h}
    R_{01,P}(h)
    \mbox{ fails.}
  \]
    \textcolor{black}{Here, the infimums are taken over all Borel functions \(f : \mathcal{X} \to \mathbb{R}^k\) and \(h : \mathcal{X} \to [k]\), respectively.}
\end{proposition}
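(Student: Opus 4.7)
The plan is to convert the pointwise (conditional) failure of ISC into a distributional failure of the consistency transfer property using a trivial ``constant'' construction. Unfolding the hypothesis that $\calL$ is not ISC gives a concrete triple: some $\bfp \in \Delta^k$, an index $y \in [k]$ with $p_y < \max_j p_j$, and a sequence $\{\bfv^n\}_n \subseteq \mathbb{R}^k$ satisfying $v_y^n = \max \bfv^n$ for every $n$ and $C_{\bfp}^{\calL}(\bfv^n) \to C_{\bfp}^{\calL,*}$. The idea is then to embed this conditional-level failure into a probabilistic setting by taking the conditional distribution of $Y$ given $X$ to be the constant vector $\bfp$.

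Concretely, I will let $\mathcal{X}$ be an arbitrary nonempty Borel space (e.g., the singleton $\{x_0\}$) and let $P$ be the distribution on $\mathcal{X} \times [k]$ whose marginal on $\mathcal{X}$ is arbitrary and whose conditional is $\bfp$. Define $\hat{f}_n(x) := \bfv^n$, a constant (hence Borel) function. Then $R_{\calL,P}(\hat{f}_n) = C_{\bfp}^{\calL}(\bfv^n)$, while for any measurable $f$ the pointwise bound $C_{\bfp}^{\calL}(f(x)) \ge C_{\bfp}^{\calL,*}$ integrates to $R_{\calL,P}(f) \ge C_{\bfp}^{\calL,*}$; the constant sequence $\hat{f}_n$ already witnesses the reverse inequality in the limit. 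Hence $\inf_f R_{\calL,P}(f) = C_{\bfp}^{\calL,*}$ and $R_{\calL,P}(\hat{f}_n) \to \inf_f R_{\calL,P}(f)$, establishing the antecedent of the consistency transfer property.

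For the 01-risk, $R_{01,P}(h) = 1 - \mathbb{E}[p_{h(X)}]$, so $\inf_h R_{01,P}(h) = 1 - \max_j p_j$, attained by the constant classifier $h \equiv \argmax_j p_j$. Because $v_y^n = \max \bfv^n$, the label $y$ is always a valid argmax of $\hat{f}_n(x) = \bfv^n$, so there is a measurable tie-breaking selection under which $\argmax \circ \hat{f}_n \equiv y$. With this selection, $R_{01,P}(\argmax \circ \hat{f}_n) = 1 - p_y$, which by the non-ISC hypothesis exceeds $1 - \max_j p_j$ by a strictly positive amount independent of $n$. Therefore $R_{01,P}(\argmax \circ \hat{f}_n) \not\to \inf_h R_{01,P}(h)$, and the consistency transfer property fails as claimed.

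The only delicate point I anticipate is the tie-breaking convention embedded in ``$\argmax \circ f$''. Since the paper allows arbitrary tie-breaking, for the consistency transfer property to be well-posed it must hold for every measurable selection, so exhibiting a single selection under which the 01-risk does not converge to the Bayes risk is sufficient; the selection $\argmax \circ \hat{f}_n \equiv y$ built above is precisely such a choice. A perturbation alternative (e.g., replace $\bfv^n$ by $\bfv^n + (1/n)\mathbf{e}_y$ to strictly maximize at $y$) would also work if $\calL$ were continuous, but since \Cref{definition:loss-functions} does not assume continuity, the tie-breaking argument is cleaner and requires no extra regularity on $\calL$.
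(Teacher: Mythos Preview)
Your proposal is correct and follows essentially the same approach as the paper: build a distribution with constant class-conditional $\bfp$ (the paper uses a point mass on a single $x_0$, you allow an arbitrary marginal, which is a harmless generalization), take $\hat f_n$ to be the constant function $\bfv^n$, identify $R_{\calL,P}$ with $C_{\bfp}^{\calL}$ and $R_{01,P}$ with $1-p_{h(\cdot)}$, and use arbitrary tie-breaking to force $\argmax\circ\hat f_n\equiv y$. Your discussion of the tie-breaking convention mirrors the paper's appeal to \citet[Lemma~4]{tewari2007consistency}, and your observation that a perturbation argument would require continuity (which \Cref{definition:loss-functions} does not assume) is apt.
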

The above proposition is essentially the multiclass analog of the result \citep[Theorem 1, part 3c \(\implies\) part 3a]{bartlett2006convexity}.
Our proof below is a simple extension  to the multiclass case of the argument in the paragraph before
\S 2.4 in
\citet{bartlett2006convexity}.
\begin{proof}
  Since \(\calL\) does not have the ISC property, there exists \(\bfp \in \Delta^{k}\) and \(y \in [k]\) such that
 $y$ such that $p_y < \max_j p_j$, and
  $
  C_{\bfp}^{\calL, *} = \inf \left\{ C_{\bfp}^{\calL}(\bfv): \bfv \in \mathbb{R}^k, \, v_{y} = \max \bfv\right\}
  $.
  Let \(\{\bfv^{{n}}\}_{n}\) be a sequence such that
  \(v_{y}^{n} = \max \bfv^{n}\)
  and
\(\lim_{n\to\infty} C_{\bfp}^{\calL}(\bfv^{n}) = C_{\bfp}^{\ast}\).
Below, fix some arbitrary \(x \in \mathcal{X}\).
Define \(P\) on \(\mathcal{X} \times [k]\) such that \(P(X=x) = 1\) and \(P(X= x, Y =y') = p_{y'}\) for each \(y' \in [k]\).

Next, we define a pair of  maps \(\vec{V}:\mathbb{R}^{k} \to \mathcal{F}\) and \(\vec{F}: \mathcal{F} \to \mathbb{R}^{k}\) as follows.
Given \(f \in \mathcal{F}\), let \(\vec{V}(f) := (f_{1}(x),\dots, f_{k}(x)) \in \mathbb{R}^{k}\).
Given \(\bfv \in \mathbb{R}^{k}\) and \(x' \in X\), define \(\vec{F}(\bfv)(x') := \bfv\) to be the constant-valued map with value \(\bfv\).
Since \(P(X=x) = 1\), we have for all \(f \in \mathcal{F}\)
\[C_{\bfp}^{\calL}(\vec{V}(f)) = R_{\calL,P}(f)
\quad \mbox{and} \quad
1 - \max p_{\argmax \vec{V}(f)}= R_{01,P}(\argmax \circ f).\]

Now, for each \(n\), let \(\hat{f}^{n} := \vec{F}(\bfv^{n}) \in \mathcal{F}\).
Then by construction we have
\(\vec{V}(\hat{f}^{n}) = \bfv^{n}\) and
\begin{align*}
\lim_{n\to\infty}R_{\calL,P}(\hat{f}^{n}) &=
                                            \lim_{n\to\infty}C_{\bfp}^{\calL}(\vec{V}(\hat{f}^{n}))
                                            \quad \because
                                            C_{\bfp}^{\calL}(\vec{V}(f)) = R_{\calL,P}(f),\,
                                            \forall f \in \mathcal{F}
  \\&=
                                            \lim_{n\to\infty}C_{\bfp}^{\calL}(\bfv^{n})
  \\&
  = C_{\bfp}^{\calL,\ast}
 \qquad \qquad \because{\mbox{assumption on \(\bfv^{n}\)}}
  \\
&=
\inf_{\bfv \in \mathbb{R}^{k}} C_{\bfp}^{\calL}(\bfv)
 \qquad \because{\mbox{definition}}
       \\
&=
\inf_{f \in \mathcal{F}} C_{\bfp}^{\calL}(\vec{V}(f))
 \quad \because{\mbox{\(\vec{V}\) is surjective onto \(\mathbb{R}^{k}\)}}
       \\
&=
            \inf_{f \in \mathcal{F}}R_{\calL,P}(f)
            \quad \because
                                            C_{\bfp}^{\calL}(\vec{V}(f)) = R_{\calL,P}(f),\,
                                            \forall f \in \mathcal{F}
\end{align*}
On the other hand, since the \(\argmax\) breaks tie arbitrarily \citep[Lemma 4]{tewari2007consistency}, we can choose \(\argmax \vec{V}(\hat{f}^{n}) = \argmax \bfv^{n} = y\).
Therefore, we have
\[R_{01,P}(\argmax \circ \hat{f}^{n})
=
1 - p_{y}
>
1 - \max_{j \in [k]} p_{j}
=
\inf_{h}
    R_{01,P}(h).
\]
Thus, we have constructed a sequence \(\hat f_{n}\) such that
  \[
  \textstyle
    R_{\calL,P}(\hat f_n) \overset{P}{\to}  \inf_{f} R_{\calL,P}(f)
    \mbox{ holds but }
    R_{01,P}({\smash{\argmax}} \circ\hat f_n) \overset{P}{\to} \inf_{h}
    R_{01,P}(h)
    \mbox{ fails.}
  \]
as desired.
\end{proof}

\section{Extended reals}\label{section:Gamma-Phi loss}

In this section, we review results on the extended real numbers. For reference, see \citet[\S 1.16]{oden2017applied}.

\begin{definition} [Convergence in extended reals] \label{definition:totally-delta-convergent}
  Let $\barR := \mathbb{R} \cup \{ \pm \infty\}$ and $\barR_{\ge 0} = \mathbb{R}_{\ge 0} \cup \{+\infty\}$.
  A sequence $\{z^t\}_t \subseteq \mathbb{R}$ has a limit in $\barR$ if one of the following holds: 1) $\{z^t\}$ has a limit in the usual sense, 2) for all $c \in \mathbb{R}$, we have $z^t \ge c$ (resp.\ $z^t \le c$) for all $t$ sufficiently large in which case we say $\lim_t z^t = +\infty$ (resp.\ $\lim_t z^t = -\infty$).
\end{definition}

The following are elementary properties of convergence in the extended reals:
\begin{lemma}\label{prop:basic-property-of-extlim-sum}
  Let $\{z^t\}$ and $\{\tilde{z}^t\}$ be sequences in $\mathbb{R}$ with limits in $\barR$.
  Then
  $z^t + \tilde{z}^t$ has a limit in $\overline{\mathbb{R}}$ equal to $\lim_t z^t + \lim_t \tilde{z}^t$ if any of the following holds:
  \begin{compactenum}
  \item at least one of $\lim_t z^t$ or $\lim_t \tilde{z}^t$ is finite, i.e., $\in \mathbb{R}$,
  \item $\{z^t\}_t$ and $\{\tilde{z}^t\}_t$ are both $\subseteq [0,\infty)$,
  \item $\{z^t\}_t$ and $\{\tilde{z}^t\}_t$ are both $\subseteq (-\infty,0]$.
  \end{compactenum}
\end{lemma}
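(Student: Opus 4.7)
The plan is a straightforward case analysis organized around a single observation: the three enumerated scenarios are exactly those that exclude the indeterminate form $(+\infty) + (-\infty)$, which is the only obstruction to the identity $\lim_t (z^t + \tilde{z}^t) = \lim_t z^t + \lim_t \tilde{z}^t$. Throughout, I would invoke the definition of extended-real limit from \Cref{definition:totally-delta-convergent}.

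For case (1), assume without loss of generality that $\lim_t z^t = L \in \mathbb{R}$. If $\lim_t \tilde{z}^t \in \mathbb{R}$ as well, the conclusion is the classical sum rule for real-valued limits. If $\lim_t \tilde{z}^t = +\infty$, then for an arbitrary $c \in \mathbb{R}$ I would choose $T_1$ with $z^t \ge L-1$ for $t \ge T_1$ and $T_2$ with $\tilde{z}^t \ge c - L + 1$ for $t \ge T_2$, so that $z^t + \tilde{z}^t \ge c$ for all $t \ge \max(T_1, T_2)$; hence $\lim_t(z^t + \tilde{z}^t) = +\infty = L + (+\infty)$. The subcase $\lim_t \tilde{z}^t = -\infty$ is symmetric.

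Case (2) reduces to case (1) whenever at least one limit is finite, so the only new subcase is $\lim_t z^t = \lim_t \tilde{z}^t = +\infty$. For $c > 0$ I would pick $T$ so that $z^t, \tilde{z}^t \ge c/2$ for $t \ge T$, giving $z^t + \tilde{z}^t \ge c$; for $c \le 0$, the nonnegativity assumption $\{z^t\}_t, \{\tilde{z}^t\}_t \subseteq [0,\infty)$ forces $z^t + \tilde{z}^t \ge 0 \ge c$ automatically. Thus $\lim_t (z^t + \tilde{z}^t) = +\infty = (+\infty) + (+\infty)$. Case (3) then follows immediately by applying case (2) to the sequences $\{-z^t\}_t, \{-\tilde{z}^t\}_t \subseteq [0,\infty)$ and negating the resulting limit.

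There is no substantive obstacle; the only point that deserves attention is the role of the sign restrictions in cases (2) and (3), which ensure that the $c \le 0$ branch of the $+\infty$-limit definition holds trivially and thereby prevent the forbidden $(+\infty) + (-\infty)$ cancellation. The lemma is ultimately a restatement of the standard rules of extended-real arithmetic, recorded explicitly here so that it can be cited in the manipulations of conditional-risk limits elsewhere in the paper.
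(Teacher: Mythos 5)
Your proof is correct, and it is essentially the paper's argument made explicit. The paper simply states that the three cases follow respectively from the standard extended-real identities $\pm\infty + c = \pm\infty$ (for $c \in \mathbb{R}$), $(+\infty)+(+\infty)=+\infty$, and $(-\infty)+(-\infty)=-\infty$; you verify each of these from first principles via \Cref{definition:totally-delta-convergent}, and your case-(3)-by-negation reduction is clean.

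One conceptual imprecision in your closing commentary is worth flagging, though it does not affect the validity of the proof steps themselves. You write that the sign restriction in case (2) ``ensures that the $c \le 0$ branch of the $+\infty$-limit definition holds trivially and thereby prevents the forbidden $(+\infty)+(-\infty)$ cancellation.'' This reverses the logical roles. The $c \le 0$ branch would hold anyway once both limits are $+\infty$, because each sequence is then eventually nonnegative regardless of any a priori sign constraint. The actual work done by the hypothesis $\{z^t\}, \{\tilde z^t\} \subseteq [0,\infty)$ is to rule out $-\infty$ as a possible limit of either sequence, so that the only subcase not already covered by case (1) is $(+\infty)+(+\infty)$. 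Your proof handles this correctly (you explicitly observe that case (2) reduces to case (1) unless both limits are $+\infty$); only the summary remark misattributes where the hypothesis bites.
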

\begin{proof}
  The results follow respectively from the following standard properties of the extended real numbers: 1.\ \(\pm \infty + c = \pm \infty\) for any \(c \in \mathbb{R}\), 2.\ \(+\infty + \infty = +\infty\) and 3.\ \(-\infty -\infty = -\infty\).
\end{proof}

\begin{definition}
  A function $f : \mathbb{R} \to \mathbb{R}$ is \emph{monotone non-increasing} (resp.\ non-decreasing) if $f(x) \ge f(y)$ for all $x,y \in \mathbb{R}$ such that $x \le y$ (resp.\ $x \ge y$).
\end{definition}
\begin{lemma}
  \label{lemma:monotone-function-extension}
  Let $f : \mathbb{R} \to \mathbb{R}$ be continuous and monotone non-increasing.
  Suppose that $\{z^t\}_t \subseteq \mathbb{R}$ has a limit $z^* \in \barR$.
  Then $f(z^t)$ has a limit $\in \barR$ and
  \begin{equation}
    \label{equation:limits-of-monotone-functions}
    \lim_t f(z^t) = \begin{cases}
      f(z^*) &: z^* \in \mathbb{R} \\
      \inf_{x \in \mathbb{R}} f(x) &: z^* = + \infty\\
      \sup_{x \in \mathbb{R}} f(x) &: z^* = - \infty.
    \end{cases}
  \end{equation}
    Thus, the statement $\lim_t f(z^t) = f(\lim_t z^t)$ is correct.
  When $f$ is monotone non-\emph{decreasing},
    Equation~\eqref{equation:limits-of-monotone-functions} holds with the $\inf$ and $\sup$ swapped.
\end{lemma}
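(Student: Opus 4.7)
The plan is to prove \Cref{lemma:monotone-function-extension} by splitting into three cases according to whether $z^\ast$ is a finite real number, $+\infty$, or $-\infty$, and then to derive the monotone non-decreasing variant by applying the result to $-f$.

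For $z^\ast \in \mathbb{R}$, convergence $z^t \to z^\ast$ holds in the ordinary sense per \Cref{definition:totally-delta-convergent}, and continuity of $f$ immediately yields $\lim_t f(z^t) = f(z^\ast) \in \mathbb{R} \subseteq \barR$, which matches the first branch of the stated formula.

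For $z^\ast = +\infty$, set $L := \inf_{x \in \mathbb{R}} f(x) \in \barR$. Since $f$ is non-increasing, $L$ also equals $\lim_{x \to +\infty} f(x)$ in the extended-real sense. The argument splits further based on whether $L$ is finite or $-\infty$. If $L \in \mathbb{R}$, then for every $\epsilon > 0$ there exists $x_0 \in \mathbb{R}$ with $f(x_0) < L + \epsilon$; since $z^t \to +\infty$, we have $z^t \geq x_0$ for all $t$ sufficiently large, and monotonicity together with the trivial lower bound $f(z^t) \geq L$ sandwiches $f(z^t) \in [L, L + \epsilon)$, giving ordinary convergence $f(z^t) \to L$. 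If $L = -\infty$, then for every $M \in \mathbb{R}$ one picks $x_0$ with $f(x_0) < M$; then $z^t \geq x_0$ eventually, and monotonicity yields $f(z^t) \leq f(x_0) < M$ for all large $t$, certifying $\lim_t f(z^t) = -\infty$ in the divergent sense of \Cref{definition:totally-delta-convergent}. The case $z^\ast = -\infty$ is entirely symmetric: replace $\inf$ by $\sup$, flip the direction of the monotonicity inequalities, and argue in the same way.

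Finally, for monotone non-decreasing $f$, apply the non-increasing result to $-f$, which is continuous and monotone non-increasing; using $\inf(-f) = -\sup f$ and $\sup(-f) = -\inf f$ together with the straightforward identity $\lim_t (-f(z^t)) = -\lim_t f(z^t)$ in $\barR$ then yields the stated formula with $\inf$ and $\sup$ swapped. The only mildly delicate step is the divergent subcase $L = \pm \infty$ in the non-finite $z^\ast$ cases, where one cannot use a standard $\epsilon$-argument and must instead directly verify the divergence clause of \Cref{definition:totally-delta-convergent}; everything else follows immediately from monotonicity, continuity, and the definition of infimum/supremum.
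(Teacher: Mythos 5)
Your proof is correct and takes essentially the same approach as the paper's: handle $z^\ast \in \mathbb{R}$ by continuity, and for $z^\ast = \pm\infty$ split on whether the infimum (resp.\ supremum) is finite or infinite and use monotonicity to trap $f(z^t)$ near its limiting value. The only cosmetic difference is that you reduce the non-decreasing case to the non-increasing one via $-f$, whereas the paper simply notes that the roles of $\inf$ and $\sup$ swap; both are fine.
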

\begin{proof}
  If $z^* \in \mathbb{R}$, then the result is simply the definition of continuity.
  Next, suppose that $z^* = +\infty$. Our goal is to show that $\lim_t f(z^t)$ exists and converges to $I:=\inf_{x \in \mathbb{R}} f(x)$.

  Consider the case  that $I = -\infty$.
  Then for any $U \in \mathbb{R}$, there exists $u \in \mathbb{R}$ such that $f(u) \le U$.
  Since $z^* = +\infty$, $z_t \ge u$ for all $t \gg 0$ sufficiently large, and in which case
  $f(z_t) \le f(u) \le U$.
  Since $U \in \mathbb{R}$ is arbitrary, we have that $\lim_t f(z^t) = -\infty$
  (Definition~\ref{definition:totally-delta-convergent}).

  Now, consider the case  that $I \in \mathbb{R}$.
  Then by definition $f(z^t) \ge I$ for all $t$.
  Furthermore, for any $\epsilon > 0$, there exists $u$ such that $f(u) \le I + \epsilon$.
  Again, since $z^* = +\infty$, $z_t \ge u$ for all $t \gg 0$ sufficiently large, in which case
  $f(z_t) \le f(u) \le I+\epsilon$.
  Since $\epsilon>0$ is arbitrary, this proves that $\lim_t f(z^t) = I$.
  The proof for the case when $z^* = - \infty$ is completely analogous.
  Furthermore, when $f$ is monotone non-decreasing, the roles of $\inf$ and $\sup$ are clearly swapped.
\end{proof}



\begin{definition}
  A sequence of vectors $\{\bfvt\}_t \in \mathbb{R}^k$ is \emph{totally convergent} if for all $y,j \in [k]$, both sequences of real numbers $\{\vt_y\}$  and $\{\vt_y - \vt_j\}$ have limits in $\overline{\mathbb{R}}$.
\end{definition}

\begin{lemma}\label{lemma:every-sequence-has-totally-convergent-subsequence}
  Every sequence $\{\bfvt\}_t \in \mathbb{R}^k$ has a subsequence that is totally convergent.
\end{lemma}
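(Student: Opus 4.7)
The plan is to obtain a totally convergent subsequence by iteratively applying sequential compactness of the extended real line $\overline{\mathbb{R}}$. The key fact is that $\overline{\mathbb{R}} = [-\infty,+\infty]$ is sequentially compact in the sense of \Cref{definition:totally-delta-convergent}: every sequence in $\mathbb{R}$ admits a subsequence with a limit in $\overline{\mathbb{R}}$. (One way to see this is via the homeomorphism $\arctan$ between $\overline{\mathbb{R}}$ and the compact interval $[-\pi/2,\pi/2]$, combined with the ordinary Bolzano--Weierstrass theorem; alternatively, a bounded subsequence converges in $\mathbb{R}$, while an unbounded one has a monotone sub-subsequence diverging to $\pm\infty$.)

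The argument then proceeds by finite successive extraction. There are only finitely many real-valued sequences that must converge in $\overline{\mathbb{R}}$, namely the $k$ coordinate sequences $\{\vt_y\}_t$ for $y \in [k]$ and the $k(k-1)$ difference sequences $\{\vt_y - \vt_j\}_t$ for distinct $y,j \in [k]$. First, I would extract a subsequence along which $\{\vt_1\}_t$ has a limit in $\overline{\mathbb{R}}$; from this, a further subsequence along which $\{\vt_2\}_t$ has a limit in $\overline{\mathbb{R}}$; and so on for $y = 3,\dots,k$. Then, continuing from the current subsequence, I would extract further sub-subsequences so that each difference $\{\vt_y - \vt_j\}_t$ has a limit in $\overline{\mathbb{R}}$, one ordered pair $(y,j)$ at a time. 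Since subsequences of convergent (in the $\overline{\mathbb{R}}$ sense) sequences remain convergent with the same limit, after the final extraction every one of the finitely many sequences of interest has a limit in $\overline{\mathbb{R}}$ along the resulting subsequence.

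By construction, the final nested subsequence of $\{\bfvt\}_t$ satisfies the defining condition of total convergence, completing the proof. There is no substantive obstacle here: the argument is a routine finite iteration of the one-dimensional compactness fact, and the only thing to verify is that passing to further subsequences preserves the convergence achieved at earlier stages, which is immediate from the definition of a limit in $\overline{\mathbb{R}}$.
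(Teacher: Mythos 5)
Your proposal is correct and takes essentially the same approach as the paper: both iteratively extract nested subsequences, first for each of the $k$ coordinate sequences and then for each of the difference sequences, using one-dimensional sequential compactness of $\overline{\mathbb{R}}$ at each step. The only cosmetic difference is that the paper extracts differences only for pairs $j < j'$ (since convergence of $\vt_j - \vt_{j'}$ implies that of $\vt_{j'} - \vt_j$), whereas you iterate over all ordered pairs, which is equally valid.
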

\begin{proof}
  Every sequence of real numbers has a convergent subsequence with limit in $\mathbb{R} \cup \{ \pm \infty\}$.\footnote{This is follows from \(\mathbb{R}\cup \{\pm \infty\}\) being the compactification of \(\mathbb{R}\). See \citet[\S 1.16]{oden2017applied}.}
  By repeatedly passing to convergent subsequences, first for all $j \in [k]$, then for all pairs $j,j' \in [k]$ with $j < j'$, we get the desired result.
\end{proof}

\section{Omitted proofs of results from Section~\ref{section:PERM-losses}}\label{sec-appendix:PERM-losses}
\begin{proof}[Proof of Lemma~\ref{lemma:equivariance-of-conditional-risk}]
To prove that
  \(C_{\bfp}(\bfv) = C_{\sigma(\bfp)}(\sigma(\bfv))\), we note that
  \[
    C_{\bfp}(\bfv)
    =
    \sum_{y \in [k]}
    p_y \calL_y(\bfv)
    =
    \sum_{y \in [k]}
    p_{\sigma(y)} {\calL}_{\sigma(y)}(\bfv)
    =
    \sum_{y \in [k]}
    [\sigma(\bfp)]_{y}
    {\calL}_{y}(\sigma(\bfv))
    =
    C_{\sigma(\bfp)} (\sigma(\bfv)).
  \]
  For the ``Furthermore'' part, note that \(\bfv \mapsto \sigma^{-1}(\bfv)\) is a bijection from \(\mathbb{R}^{k}\) to itself. Hence,
  \begin{align*}
    C_{\bfp}^* &= \inf \{C_{\bfp}(\bfv) : \bfv \in \mathbb{R}^k\} \qquad
 \because                \mbox{Definition~\ref{definition:perm-condition-risk}}\\
    &=
      \inf \{C_{\bfp}(\sigma^{-1}(\bfv)) : \bfv \in \mathbb{R}^k\}
\qquad \because \mbox{\(\bfv \mapsto \sigma^{-1}(\bfv)\) is a bijection}
    \\
    &=
    \inf \{C_{\sigma(\bfp)}(\sigma(\sigma^{-1}(\bfv))) : \bfv \in \mathbb{R}^k\}
      \qquad \because
      C_{\bfp}(\bfv) = C_{\sigma(\bfp)}(\sigma(\bfv))
    \\
    &=
    \inf \{C_{\sigma(\bfp)}(\bfv) : \bfv \in \mathbb{R}^k\}
      \qquad \because
\sigma(\sigma^{-1}(\bfv))
=
\bfv
\end{align*}
  The right hand side is equal to
  $
  \inf \{C_{\sigma(\bfp)}(\bfv) : \bfv \in \mathbb{R}^k\} = C_{\sigma(\bfp)}^*$
  since
  \(
\sigma(\sigma^{-1}(\bfv))
=
\bfv
  \)
\end{proof}
\begin{proof}[Proof of Lemma~\ref{lemma:transposition-identity}]
  Since \(\calL\) is permutation equivariant, we have for all \(j \in [k]\) that
  \begin{equation}
    \calL_{j}(\mathbf{S}_{\tau}(\bfv))
    =
    [\calL(\mathbf{S}_{\tau}(\bfv))]_{j}
    =
    [\mathbf{S}_{\tau}(\calL(\bfv))]_{j}
    =
    [\calL(\bfv)]_{\tau(j)}
    =
    \calL_{\tau(j)}(\bfv).
\label{equation:transposition-identity-eq1}
\end{equation}
To finish the proof, we have
\begin{align*}
    &C_{\bfp}(\bfv) - C_{\bfp}(\tau(\bfv))\\
    &
      =\sum_{j \in [k]} p_j
      \left(\calL_j(\bfv) -\calL_j(\tau(\bfv))\right)
  \\
    &
      =\sum_{j \in [k]}
p_j
      \left(
      \calL_j(\bfv) - \calL_{\tau(j)}(\bfv)
\right      )
      \quad \because \mbox{Equation~\eqref{equation:transposition-identity-eq1}
      }
\\
    &
=(p_y \calL_y(\bfv) + p_{y'}\calL_{y'}(\bfv))
-
(p_y \calL_{y'}(\bfv) + p_{y'} \calL_y(\bfv))
      \quad \because \mbox{\(\tau\) is a transposition}
\\
&=
p_y (\calL_y(\bfv) - \calL_{y'}(\bfv)) + p_{y'} (\calL_{y'}(\bfv) - \calL_{y}(\bfv))
\\
&=
(p_y-p_{y'}) (\calL_y(\bfv) - \calL_{y'}(\bfv)),
\end{align*}
as desired.
\end{proof}

\section{Omitted proof of results from Section~\ref{section:proof-positive}}\label{sec-appendix:proof-positive}

\begin{proof}[Proof of Lemma~\ref{lemma:constantization3}]
  Let \(y,j \in [k]\) be arbitrary.
Since \(\bfvt\) is totally convergent, \(\vt_y - \vt_{j}\) has a limit in \(\mathbb{R}\cup \{\pm\infty\}\).
Next, since \(\phi\) is monotone and non-negative by condition (\PhiNDZ), we have by Lemma~\ref{lemma:monotone-function-extension}
that \(\phi(\vt_y - \vt_{j})\) has a limit in \([0,+\infty]\).
Now, that \(\lim_{t} C_{\bfp}(\bfvt)\) has a limit in \([0,+\infty]\) follows immediately from Lemma~\ref{prop:basic-property-of-extlim-sum}.

Next, define $a^t_y := \sum_{j \in [k]: j \ne y}  \phi(\vt_y - \vt_{j})$
  and $\tilde a^t_y := \sum_{j \in [k]: j \ne y}  \phi(\tvt_y - \tvt_{j})$.
  We proceed stepwise as follows:

  \vspace{1em}

  \begin{compactenum}[Step 1:]
  \item \label{step1:constanization}$\lim_t \phi(\vt_y - \vt_j) = \lim_t \phi(\tvt_y - \tvt_j)$ as elements of $[0,+\infty]$,
  \item \label{step2:constanization}$\lim_t a^t_y = \lim_t \tilde a^t_y$ as elements of $[0,+\infty]$,
    \item\label{step3:constanization} $\lim_t \gamma(a^t_y) = \lim_t \gamma(\tilde{a}^t_y)$
      as elements of $[0,+\infty]$
    \item\label{step4:constanization} $\lim_t \sum_{y \in [k]} p_y \gamma(a^t_y) = \lim_t \sum_{y \in [k]} p_y \gamma(\tilde{a}^t_y)$
  \end{compactenum}

  \vspace{1em}

  Proof of \underline{Step \ref{step1:constanization}}. From Lemma~\ref{lemma:monotone-function-extension} and the fact that $\phi$ is monotone and continuous, we get that
  $\lim_t \phi(\vt_y - \vt_j) = \phi(\lim_t \vt_y - \vt_j)$
  and
  $\lim_t \phi(\tvt_y - \tvt_j) = \phi(\lim_t \tvt_y - \tvt_j)$.
  Note that Lemma~\ref{lemma:monotone-function-extension} also guarantees that these limits exist.
  Non-negativity of the limit values follows from the non-negativity of $\phi$.

  \underline{Step \ref{step2:constanization}}. From Lemma~\ref{prop:basic-property-of-extlim-sum} and the non-negativity of $\phi$, we have
  \[\lim a^t_y = \sum_{j \in [k] \setminus \{y\}} \lim_t \phi(\vt_y - \vt_j)
    =
    \sum_{j \in [k]\setminus \{y\}} \lim_t \phi(\tvt_y - \tvt_j)
    =
    \lim \tilde{a}^t_y
  \]
  where the equality in the middle follows from Step \ref{step1:constanization}.
  Note that Lemma~\ref{prop:basic-property-of-extlim-sum} also guarantees that these limits exist.

  \underline{Step \ref{step3:constanization}}. This follows from Step \ref{step2:constanization}, Lemma~\ref{lemma:monotone-function-extension} and the non-negativity of $\gamma$ on $[0,\infty)$.

  \underline{Step \ref{step4:constanization}}. This follows from Step \ref{step3:constanization}
  and Lemma~\ref{prop:basic-property-of-extlim-sum}.
\end{proof}

\begin{proof}[Proof of Lemma~\ref{proposition:strict-ordering}]
    Throughout this proof, let \(\gamma'(x) := \frac{d \gamma}{dx}(x)\) and
   \(\phi'(x) := \frac{d \phi}{dx}(x)\).
    Recall that
    \[
      C_{\bfq}(\bfv) = \sum_{y \in [\ell]}q_y\gamma\Big( \sum_{j \in [k] \setminus \{y\}} \phi(v_y - v_j)\Big).
    \]
    For each $y$, define
    $
    \Gamma_y(\bfv) := \gamma'\Big( \sum_{j \in [k] \setminus \{y\}} \phi(v_y - v_j)\Big).$
    Thus
    \begin{equation}
      \frac{\partial C_{\bfq}}{\partial v_y}(\bfv)
      =
      \Big(
      q_y
      \Gamma_y(\bfv)
        \sum_{j \in [k]\setminus \{y\}}
      \phi'(v_y - v_j)
      \Big)
      -
      \Big(
        \sum_{j \in [k]\setminus \{y\}}
        q_j
      \Gamma_j(\bfv)
      \phi'(v_j - v_y)
      \Big).
\label{equation:vanishing-of-partial-derivatives}
\end{equation}
    The vanishing of the first two partial derivatives
    $\begin{bmatrix}
      \frac{\partial C_{\bfq}}{\partial v_1}(\bfv)
      &
      \frac{\partial C_{\bfq}}{\partial v_2}(\bfv)
    \end{bmatrix}= 0$ (i.e., Equation~\eqref{equation:vanishing-of-partial-derivatives} where \(y=1,2\))
    can be cast in matrix form equivalently as follows:
    \[
      \begin{bmatrix}
        q_1 \Gamma_1(\bfv) \\
        q_2 \Gamma_2(\bfv) \\
        q_3 \Gamma_3(\bfv) \\
        \vdots\\
        q_k \Gamma_k(\bfv)
      \end{bmatrix}^\top
      \begin{bmatrix}
        \sum_{j \in [k] \setminus \{1\}} \phi'(v_1 - v_j) &
        -\phi'(v_1 - v_2)
        \\
        -\phi'(v_2 - v_1)
                                                                    &
        \sum_{j \in [k] \setminus \{2\}} \phi'(v_2 - v_j)\\
        -\phi'(v_3 - v_1) &
        -\phi'(v_3 - v_2) \\
        \vdots & \vdots\\
        -\phi'(v_k - v_1) &
        -\phi'(v_k - v_2) \\
      \end{bmatrix}=\vzero.
  \]
  The above equation is satisfied at $\bfv = \bm{\alpha}$, which satisfies $\alpha_1 = \alpha_2$ by assumption.
    \[
      \begin{bmatrix}
        q_1 \Gamma_1(\bm{\alpha}) \\
        q_2 \Gamma_2(\bm{\alpha}) \\
        q_3 \Gamma_3(\bm{\alpha}) \\
        \vdots\\
        q_k \Gamma_k(\bm{\alpha})
      \end{bmatrix}^\top
      \begin{bmatrix}
        \sum_{j \in [k] \setminus \{1\}} \phi'(\alpha_1 - \alpha_j) &
        -\phi'(0)
        \\
        -\phi'(0)
                                                                    &
        \sum_{j \in [k] \setminus \{2\}} \phi'(\alpha_2 - \alpha_j)\\
        -\phi'(\alpha_3 - \alpha_1) &
        -\phi'(\alpha_3 - \alpha_1) \\
        \vdots & \vdots\\
        -\phi'(\alpha_k - \alpha_1) &
        -\phi'(\alpha_k - \alpha_1) \\
      \end{bmatrix}=\vzero.
  \]

  Equivalently, we can rearrange the above equation as
    \begin{align*}
      &
      \begin{bmatrix}
        q_1 \Gamma_1(\bm{\alpha}) \\
        q_2 \Gamma_2(\bm{\alpha})
      \end{bmatrix}^\top
      \begin{bmatrix}
        \sum_{j \in [k] \setminus \{1\}} \phi'(\alpha_1 - \alpha_j) &
        -\phi'(0)
        \\
        -\phi'(0)
                                                                    &
        \sum_{j \in [k] \setminus \{2\}} \phi'(\alpha_2 - \alpha_j)
      \end{bmatrix}
      \\
      &=
      \underbrace{
      \begin{bmatrix}
        q_3 \Gamma_3(\bm{\alpha}) \\
        \vdots\\
        q_k \Gamma_k(\bm{\alpha})
      \end{bmatrix}^\top
      \begin{bmatrix}
        \phi'(\alpha_3 - \alpha_1) \\
        \vdots \\
        \phi'(\alpha_k - \alpha_1) \\
      \end{bmatrix}
    }_{=:d}
      \begin{bmatrix}
        1 & 1
      \end{bmatrix}
      =
      d \mathbf{1}^\top
  \end{align*}
  Furthermore, note that
  \begin{align*}
        \sum_{j \in [k] \setminus \{1\}} \phi'(\alpha_1 - \alpha_j)
        &=
        \phi'(\alpha_1 - \alpha_2)
        +
        \sum_{j \in [k] \setminus \{1,2\}} \phi'(\alpha_1 - \alpha_j)
        \\
        &=
        \phi'(0)
        +
        \sum_{j \in [k] \setminus \{1,2\}} \phi'(\alpha_1 - \alpha_j)
        \\
        &=
        \phi'(0)
        +
        \sum_{j \in [k] \setminus \{1,2\}} \phi'(\alpha_2 - \alpha_j)
        \\
        &=
        \sum_{j \in [k] \setminus \{2\}} \phi'(\alpha_2 - \alpha_j).
  \end{align*}
  Likewise,
  $\Gamma_1(\bm{\alpha})
  =
  \gamma'(\phi(0) + \sum_{j \in [k] \setminus \{1,2\}} \phi(v_1 - v_j))
  =\Gamma_2(\bm{\alpha})$.
  Let $a := \phi'(0)$, $b :=
  \sum_{j \in [k] \setminus \{1,2\}} \phi'(\alpha_1 - \alpha_j)$, and $c := \Gamma_1(\bm{\alpha})$. Since $\gamma'(\cdot) > 0$, we have $c > 0$ and so
    \[
      c
      \begin{bmatrix}
        q_1\\
        q_2
      \end{bmatrix}^\top
      \begin{bmatrix}
        a+b &
        -a
        \\
        -a
                                                                    &
                                                                    a+b
      \end{bmatrix}
      =
      d
      \mathbf{1}^\top
      \implies
      \begin{bmatrix}
        a+b &
        -a
        \\
        -a
                                                                    &
                                                                    a+b
      \end{bmatrix}
      \begin{bmatrix}
        q_1\\
        q_2
      \end{bmatrix}
      =
      \frac{d}{c}
      \mathbf{1}.
  \]
Note that since $\phi' \le 0$ and $\phi'(0) \ne 0$, we have $a \in (-\infty,0)$ and $b \in (-\infty,0]$. 
\textcolor{black}{Next, subtract the second equation from the first one in the above linear system, we get
\(
(2a+b)(q_1-q_2) = 0
\). Since \(2a+b<0\), we  have that \(q_1=q_2\).}
  \end{proof}


\begin{proof}[Proof of Lemma~\ref{lemma:non-ISC-property}]
    By Definition~\ref{definition:ISC}, there exists some $\mathbf{q} \in \Delta^k$ and $y \in [k]$ such that $q_y < \max_{j \in [k]} q_j$ and
  \[
    C_{\bfq}^*=
    \inf \{ C_{\bfq}(\mathbf{v}) : \mathbf{v} \in \mathbb{R}^k, \, v_y = \max_{j \in [k]} v_j\}.
  \]
  The above implies that there exists a sequence $\{\bfvt\}_t \subseteq \mathbb{R}^k$ such that $\lim_{t} C_{\bfq}(\bfvt) = C_{\bfq}^*$ and $\vt_y = \max_{j \in [k]} \vt_j$ for all $t$.
  Let ${\sigma} \in \mathtt{Sym}(k)$ be such that ${\sigma}(\bfq) \in \Delta^k_{\mathtt{desc}}$.
  Let $\tilde{y} := {\sigma}^{-1}(y)$
  and ${z} \in [k]$ be the smallest index such that $q_{{\sigma}({z})} = q_{{\sigma}(\tilde{y})}$ (note that ${\sigma}(\tilde{y}) = y$ by definition).
  Furthermore, we have that $z > 1$ since $q_{\sigma(1)} = \max \bfq > q_y = q_{{\sigma}(z)}$.

  Let $\tau \in \mathtt{Sym}(k)$ be the permutation that swaps $z$ and $\tilde{y}$ while leaving all other elements of $[k]$ unchanged.
  Note that if $z = \tilde{y}$, then $\tau$ is the trivial permutation, i.e., the identity map on $[k]$.
  Define $\bfp := \tau(\sigma(\bfq))$, and $\bfwt := \tau(\sigma(\bfvt))$.
  Observe that $\bfp = \tau ({\sigma}(\bfq)) = {\sigma}(\bfq)$ and thus $\bfp \in \Delta^k_{\mathtt{desc}}$ as well.
  We claim that $p_{z-1} > p_z$. To see this,
  note that
  \[
    p_{z-1} = [\tau ({\sigma}(\bfq))]_{z-1}
    =[{\sigma}(\bfq)]_{\tau(z-1)}
    =[{\sigma}(\bfq)]_{z-1}
    =q_{{\sigma}(z-1)}
    > q_{{\sigma}(z)}
    =q_y
  \]
  and
  \[
    p_{z} = [\tau({\sigma}(\bfq))]_{z}
    =
    [{\sigma}(\bfq)]_{\tau(z)}
    =
    [{\sigma}(\bfq)]_{\tilde{y}}
    =
    q_{{\sigma}(\tilde{y})}
    =
    q_y.
  \]

  By Lemma~\ref{lemma:equivariance-of-conditional-risk}, we have
  \[
    \lim_t C_{\bfp}(\bfwt)=
    \lim_t C_{\tau(\sigma(\bfq))}(\tau(\sigma(\bfvt))) = \lim_t C_{\bfq}(\bfvt) = C_{\mathbf{q}}^* = C_{\tau(\sigma(\bfq))}^*
  =C_{\bfp}^*.\]
  Furthermore, we have
  $
    \max \bfvt=
    \max \sigma(\bfvt)
    =
  \max \bfwt
$
  and so
  \[
    \wt_{z} =
    [\bfwt]_{z}
    =
    [\tau({\sigma}(\bfvt))]_{z}
    =
    [{\sigma}(\bfvt)]_{\tau(z)}
    =
    \vt_{\sigma(\tau(z))}
    =
    \vt_{\sigma(\tilde{y})}
    =
    \vt_{y}
    =
    \max \bfvt
    =
    \max \bfwt.
  \]
  In summary, we have an index $z \in [k]$ where $z>1$ and a probability vector $\mathbf{p} \in \Delta^k_{\mathtt{desc}}$ such that $p_{z-1} > p_z$. Furthermore, we have a  sequence $\{\bfwt\}_t$ such that $\lim_t C_{\bfp} (\bfwt) = C_{\bfp}^*$ and
  $\wt_z = \max \bfwt$.
  This implies the desired condition in the statement of Lemma~\ref{lemma:non-ISC-property}.
  \end{proof}

\section{Omitted proof of results from Section~\ref{section:Gamma-Phi-non-calibrated-example}}\label{sec-appendix:proof-negative}

    \begin{lemma}\label{lemma:counter-example-helper}
      In the setting of Section~\ref{section:Gamma-Phi-non-calibrated-example}, we have
      $\ell = 2$ and $\bm{\alpha} = (0,0)$.
    \end{lemma}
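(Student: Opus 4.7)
The plan is to combine three facts. First, $C_{\bfp}^{*} = 1$. Second, no sequence in $\mathtt{SEQ}$ can have $\ell = 1$. Third, the explicit sequence $\bfvt := (0, 0, -t, \dots, -t)$ lies in $\mathtt{SEQ}$ and realizes $\ell = 2$ with $\bm{\alpha} = (0, 0)$. By the minimality of $\ell$ in (\ref{equation:minimality-of-ell}), the first two yield $\ell = 2$; a uniqueness statement proved below pins down $\bm{\alpha} = (0, 0)$.

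The technical crux, already flagged in the sketch preceding the lemma, is to show that $F(x) := r\gamma(\phi(x)) + (1-r)\gamma(\phi(-x))$ attains its unique minimum at $x = 0$, with $F(0) = \gamma(1) = 1$. Substituting $u = e^{x}$, this reduces to showing that $H(u) := r\gamma(1/u) + (1-r)\gamma(u)$ has a unique minimizer at $u = 1$. Since $\gamma$ has a kink at $1$, I would split on $u \in (0, 1)$ (where $1/u \ge 1$) and $u \in (1, \infty)$ (where $1/u \le 1$). Direct differentiation in each region gives $H'(u) = \tfrac{2(1-u)}{u^{3}}\bigl((1-r)u^{3} - 2r\bigr)$ on $(0, 1)$ and $H'(u) = \tfrac{2(u-1)}{u^{3}}\bigl(2(1-r)u^{3} - r\bigr)$ on $(1, \infty)$. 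The assumption $r > 1/2$ forces $(1-r)u^{3} < 1 - r < r < 2r$, making the first expression strictly negative on $(0, 1)$; the assumption $r \le 2/3$ forces $2(1-r)u^{3} > 2(1-r) \ge r$ on $(1, \infty)$, so the second is strictly positive (at the boundary $r = 2/3$ one uses the factorization $H'(u) = \tfrac{4(u-1)^{2}(u^{2}+u+1)}{3u^{3}} > 0$ for $u > 1$). Hence $H$ is strictly decreasing on $(0, 1)$ and strictly increasing on $(1, \infty)$.

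From this the three facts follow directly. The lower bound $C_{\bfp}^{*} \ge 1$ comes from $C_{\bfp}(\bfv) \ge r\gamma(\phi(v_{1} - v_{2})) + (1-r)\gamma(\phi(v_{2} - v_{1})) = F(v_{1} - v_{2}) \ge 1$, using monotonicity of $\gamma$ and non-negativity of $\phi$; the sequence $(0, 0, -t, \dots, -t)$ achieves $\lim_{t} C_{\bfp}(\bfvt) = \gamma(1) = 1$, supplying the matching upper bound. If $\ell = 1$ were possible, then $\vt_{1} = \alpha_{1} = 0$ and $\vt_{2} \to -\infty$ would give $\phi(\vt_{2} - \vt_{1}) \to +\infty$ and $C_{\bfp}(\bfvt) \to +\infty$, contradicting item~1 of Proposition~\ref{proposition:infimum-attaining-sequences2}. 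The same sequence $(0, 0, -t, \dots, -t)$ satisfies items 1--3 of Proposition~\ref{proposition:infimum-attaining-sequences2} with $\ell = 2$, $\bm{\alpha} = (0, 0)$, and $\bfq = (r, 1-r)$, since $C_{\bfq}((0, 0)) = F(0) = 1 = C_{\bfq}^{*}$. Thus $\ell \le 2$ by minimality, and so $\ell = 2$; finally, $C_{\bfq}(\bm{\alpha}) = C_{\bfq}^{*}$ together with $\alpha_{1} = 0$ and uniqueness of $F$'s minimum force $\alpha_{2} = 0$.

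The main obstacle is the piecewise derivative analysis of $F$: the range $r \in (1/2, 2/3]$ is exactly tight for the two sign conditions, and at the endpoint $r = 2/3$ one needs the separate factoring $H'(u) \propto (u-1)^{2}(u^{2}+u+1)$ on $(1, \infty)$ to confirm strict positivity rather than mere non-negativity.
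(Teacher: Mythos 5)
Your proof is correct, and it departs from the paper's argument in two respects worth noting. First, to rule out $\ell \ge 3$ the paper runs a reduction: it perturbs $\alpha_\ell \to \alpha_\ell - \epsilon$, shows $C_{\bfq}$ does not increase, and then constructs a new sequence in $\mathtt{SEQ}$ with strictly smaller $\ell$, contradicting the minimality assumption in Equation~\eqref{equation:minimality-of-ell}. You instead exhibit the explicit sequence $(0,0,-t,\dots,-t) \in \mathtt{SEQ}$ with $\ell = 2$ directly, so minimality immediately forces $\ell \le 2$; combined with the same $\ell = 1 \Rightarrow C_{\bfp}(\bfvt) \to +\infty$ contradiction, this pins $\ell = 2$ more economically. (There is a mild circularity you handle correctly: checking item~3 for the explicit sequence requires already knowing $C_{\bfq}^* = F(0)$, so the uniqueness-of-minimum lemma must be proved first, then used both to certify membership in $\mathtt{SEQ}$ and to deduce $\alpha_2 = 0$.) Second, for the uniqueness of the minimizer of $F$, the paper directly differentiates $F$ with piecewise $\gamma'$, finds the zeros of $G_\pm$ in closed form as $\tfrac{1}{3}\ln(\tfrac{r}{2(1-r)})$ and $\tfrac{1}{3}\ln(\tfrac{2r}{1-r})$, and checks signs at $\pm\ln 2$. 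Your substitution $u = e^x$ turns this into a sign analysis of two cubic factors $(1-r)u^3 - 2r$ and $2(1-r)u^3 - r$, which is arguably cleaner and makes the role of the thresholds $r > 1/2$ and $r \le 2/3$ transparent. Both derivations give the same conclusion on the same range of $r$; your explicit factorization at $r = 2/3$ is redundant (the strict inequality $2(1-r)u^3 > 2(1-r) \ge r$ for $u > 1$ already gives strict positivity) but harmless. Overall the structure, key facts, and final step ($F$ uniquely minimized at $0 \Rightarrow \alpha_2 = 0$) all agree with the paper.
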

\begin{proof}[Proof of Lemma~\ref{lemma:counter-example-helper}]
      Note that since \(\alpha_{1} =0\) already, for the ``$\bm{\alpha} = (0,0)$'' part we only need to show that \(\alpha_{2} = 0\).
      Now, to proceed, we first show that $\ell = 2$. To this end, we show that assuming $\ell \in \{1,3,\dots ,k\}$ leads to a contradiction.
      First, assume that $\ell = 1$. Then we have $\lim_t \vt_2 = \cdots = \lim_t \vt_k = -\infty$.
      Since $\gamma$ is increasing and $\phi \ge 0$,
      from Equation~\eqref{equation:counter-example-eq1} we have for any $\bfv \in \mathbb{R}^k$ that
      \[
        C_{\bfp}(\bfv) \ge
        r \gamma\Big(\sum_{j \in [k] \setminus \{1\}} \phi(v_1 - v_j)\Big)
        + (1-r) \gamma\left(\phi(v_2 - v_1) \right).
      \]
      Since $\vt_1 = 0$ for all $t$, we have
      \begin{align*}
        \lim_t C_{\bfp}(\bfvt) &\ge \lim_t
                                 r \gamma\Big(\sum_{j \in [k] \setminus \{1\}} \phi(- v_j)\Big)
                                 + (1-r) \gamma\left(\phi(v_2 ) \right)\\
                               &=r \gamma\left((k-1)\phi(+\infty) \right) + (1-r) \gamma\left(\phi(-\infty) \right) \\
                               &= r \gamma(0) + (1-r) \gamma(+\infty) \\
                               &\ge +\infty. \qquad \because \mbox{$\gamma(+\infty) = +\infty$ (Definition~\ref{definition:Gamma-conditions})}
      \end{align*}
      This is a contradiction since $C_{\bfp}(\vzero) =  \gamma\left((k-1)\phi(0)\right) < +\infty$.

      Next, we assume that $\ell \in \{3,\dots, k\}$ and derive a contradiction.
      Recall our definition that \(\bfq = (r,1-r,0,\dots, 0)\).
      Now, for a generic $\bfw \in \mathbb{R}^\ell$, recall that
      \(C_{\bfq}(\bfw)
        =
          r\calL_1(\bfw)
          +(1-r) \calL_2(\bfw)\)
      where for $y \in \{1,2\}$, we have
      \[
        \calL_y(\bfw) =
        \gamma\Big( \sum_{j\in [\ell]\setminus \{y\}}\phi(w_y- w_j) \Big).
      \]
      Let $\epsilon > 0$ and define $\bm{\beta} \in \mathbb{R}^\ell$ by
      \[
        \beta_j = \begin{cases}
                    \alpha_j &: j \ne \ell\\
                    \alpha_\ell - \epsilon &: j = \ell.
                  \end{cases}
                \]
                For $y \in \{1,2\}$, since $\beta_\ell < \alpha_\ell$ and $\beta_j = \alpha_j$ for $j \in [k] \setminus \{\ell\}$, we have
                \begin{align*}
                  &\begin{cases}
                     \beta_y - \beta_j =
                     \alpha_y - \alpha_j  &: j \ne \ell\\
                     \beta_y - \beta_\ell >
                     \alpha_y - \alpha_\ell  &: j = \ell
                   \end{cases}
                  \\
                  &\implies
                    \begin{cases}
                      \phi(\beta_y - \beta_j) =
                      \alpha_y - \alpha_j  &: j \ne \ell\\
                      \phi(\beta_y - \beta_\ell) \le
                      \phi(\alpha_y - \alpha_\ell)  &: j =\ell
                    \end{cases}
                  \\
                  &\implies
                    \calL_y(\bm{\beta}) =
                    \gamma\Big( \sum_{j\in [\ell]\setminus \{y\}}\phi(\beta_y- \beta_j) \Big)
                    \le
                    \gamma\Big( \sum_{j\in [\ell]\setminus \{y\}}\phi(\alpha_y- \alpha) \Big)
                    =
                    \calL_y(\bm{\alpha}).
                \end{align*}
                Thus, $C_{\bfq}(\bm{\alpha}) \ge C_{\bfq}(\bm{\beta})$
                and so
                $C_{\bfq}(\bm{\alpha}) \ge \lim_{\epsilon  \to \infty} C_{\bfq}(\bm{\beta})$ as well.
                By Lemma~\ref{lemma:Cp-is-Cq} and that $p_y = 0$ for $y \ge 2$, we have $\lim_t C_{\bfp}(\bfvt) = C_{\bfq}(\bm{\alpha})$.
                Now, define $\{\bftvt\}_t \subseteq \mathbb{R}^k$ by
                \[
                  \tvt_j := \begin{cases}
                              \vt_j &: j \ne \ell\\
                              -t &: j = \ell.
                            \end{cases}
                          \]
                          By construction we have $\lim_t C_{\bfp}(\bftvt) = \lim_{\epsilon \to \infty} C_{\bfq}(\bm{\beta})$ and $\{\bftvt\}_t \in \mathtt{SEQ}$.
                          Furthermore, since $\lim_t \tvt_\ell = -\infty$, we have a contradiction of the minimality of $\ell$ (Equation~\ref{equation:minimality-of-ell}).

                          Below, we can assume that $\ell =2$, where we have $\bfq = [r,1-r] \in \Delta^2_{\mathtt{desc}}$ and so
                          \begin{equation}
                            C_{\bfq}(\bm{\alpha}) =
                            r \gamma\left(\phi(- \alpha_2)\right) + (1-r) \gamma\left(\phi(\alpha_2) \right)
                            =
                            \inf_{\bfw \in \mathbb{R}^2} C_{\bfq}(\bfw).
\label{equation:counter-example-eq2}
\end{equation}
                          Recall that our goal is to show that \(\alpha_{2} = 0\). To this end, consider the function
                          \[
                            F(x) := r \gamma\left(\phi( x)\right) + (1-r) \gamma\left(\phi(-x) \right).
                          \]
                          Thus we have $C_{\bfq}(\bm{\alpha}) = \inf_{x \in \mathbb{R}} F(x)$.
                          To finish the proof, it suffices to prove that \(F\) has a unique minimum at \(0\).
 We begin by computing the derivative \(F'(x)\) of $F(x)$. Using the chain rule, we have
                          \[
                            F'(x)
                            =
                            r\gamma'(\phi(x)) \phi'(x)
                            -
                            (1-r)\gamma'(\phi(-x)) \phi'(-x).
                          \]
                          Now, $\phi'(x) = -\exp(-x)$ and
                          \[
                            \gamma'(x)
                            =
                            \begin{cases}
                              -2(x-1) & : x <1 \\
                              4(x-1) & : x \ge 1.
                            \end{cases}
                          \]
                          If $x > 0$, then $\phi(x) < 1$ and $\phi(-x) > 1$. Thus, when $x > 0$, we have
                          \begin{align*}
                            F'(x)
                            &=
                              r
                              (-2(\exp(-x) - 1))
                              (-\exp(-x))
                              -
                              (1-r)(4(\exp(x)-1)) (-\exp(x))
                            \\
                            &=
                              2r
                              (\exp(-x) - 1)
                              \exp(-x)
                              +
                              4(1-r)(\exp(x)-1) \exp(x)\\
                            & =: G_+(x).
                          \end{align*}

                          If $x \le 0$, then $\phi(x) \ge 1$ and $\phi(-x)  \le 1$. Thus, when $x \le 0$, we have
                          \begin{align*}
                            F'(x)
                            &=
                              r
                              (4(\exp(-x) - 1))
                              (-\exp(-x))
                              -
                              (1-r)(-2(\exp(x)-1)) (-\exp(x))
                            \\
                            &=
                              -4r
                              (\exp(-x) - 1)
                              \exp(-x)
                              -
                              2(1-r)(\exp(x)-1) \exp(x)\\
                            &=: G_-(x).
                          \end{align*}

                          Thus, by definition, we have
                          \[
                            F'(x) =
                            \begin{cases}
                              G_+(x) &: x > 0\\
                              G_-(x) &: x < 0\\
                              0 &: x = 0.
                            \end{cases}
                          \]
                          Finally, we prove that \(F'(x)\) vanishes only at $x = 0$
                          under the assumption that
                          $r \in [\tfrac{1}{3},\tfrac{2}{3}]$.
                          We now consider the zeros of both $G_+(x)$ and $G_-(x)$, i.e., $x \in \mathbb{R}$ where the functions vanish.
                          Clearly, both functions vanish at $x = 0$.
                          For $x \ne 0$, we compute
                          \begin{align*}
                            &0=G_+(x) =
                              2r
                              (\exp(-x) - 1)
                              \exp(-x)
                              +
                              4(1-r)(\exp(x)-1) \exp(x)
                            \\
                            &\iff
                              \tfrac{r}{2(1-r)}
                              =
                              -
                              \tfrac{  \exp(x)(\exp(x)-1) }
                              {
                              \exp(-x)
                              (\exp(-x) - 1)
                              }.
                          \end{align*}
                          Simplifying the right hand side, we have
                          \begin{align*}
                            -
                            \tfrac{  \exp(x)(\exp(x)-1) }
                            {
                            \exp(-x)
                            (\exp(-x) - 1)
                            }
                            &=
                              -\exp(2x)
                              \tfrac{  \exp(x)-1 }
                              {
                              \exp(-x) - 1
                              }
                            \\
                            &=
                              -\exp(2x)
                              \exp(x)
                              \tfrac{  1-\exp(-x) }
                              {
                              \exp(-x) - 1
                              }
                            \\
                            &=
                              \exp(3x).
                          \end{align*}

                          Thus,
                          \(0=G_+(x) \)
                          iff
                          \(
                            \tfrac{1}{3}\ln\left(\tfrac{r}{2(1-r)}\right) = x.
\)
Similarly,
                          \(                            0=G_-(x) \)
                          iff
                          \(
                            \tfrac{1}{3}\ln\left(
                              \tfrac{2r}{1-r}\right) = x.
\)
                          Thus, $G_+(x)$ has a zero on $x >0$ if and only if
                          \[
                            \tfrac{1}{3}\ln\left(
                              \tfrac{r}{2(1-r)}
                            \right)
                            > 0
                            \iff
                            \tfrac{r}{2(1-r)} > 1
                            \iff
                            r > 2/3.
                          \]
                          Similarly, $G_-(x)$ has a zero on $x < 0$ if and only if
                          \[
                            \tfrac{1}{3}\ln\left(
                              \tfrac{2r}{1-r}
                            \right)
                            < 0
                            \iff
                            \tfrac{2r}{1-r} < 1
                            \iff
                            r < 1/3.
                          \]
                          Taken together, we see that if $r \in [\frac{1}{3},\tfrac{2}{3}]$, then $F'(x)$ only vanishes at $x = 0$.
                          Moreover, \[F'(\ln(2)) = G_{+}(\ln(2)) = 8-\tfrac{17}{2}r
                            \ge
8-\tfrac{17}{2}\cdot\tfrac{2}{3} >0
                          \]
                          and likewise \(F'(-\ln(2)) = G_{-}(-\ln(2)) = \tfrac{1}{2}(1-17r) \le
\tfrac{1}{2}(1-17\tfrac{1}{3}) < 0
                          \).
                          Thus, \(F(x)\) is decreasing on \(x<0\) and increasing on \(x >0\).
                          This proves that \(F\) has a unique minimizer at \(x =0\) and
concludes the proof of Lemma~\ref{lemma:counter-example-helper}.
                        \end{proof}

                        \end{sloppypar}
\end{document}